\documentclass[journal]{IEEEtran}

\usepackage{cite}                    
\usepackage{xcolor}
\usepackage{graphicx}
\usepackage{amsmath,amssymb}
\usepackage{amsthm}
\theoremstyle{plain}
\newtheorem{lemma}{Lemma}
\newtheorem{proposition}{Proposition}
\theoremstyle{definition}
\newtheorem{assumption}{Assumption}
\theoremstyle{remark}
\newtheorem{remark}{Remark}
\usepackage{booktabs}
\usepackage{multirow}
\usepackage{algorithm}
\usepackage{algpseudocode}
\usepackage{subcaption}
\usepackage{tabularx}
\usepackage{stfloats}            

\usepackage[colorlinks=false,
            pdfborder={0 0 1},
            citebordercolor={0 1 0},
            linkbordercolor={1 0 0},
            urlbordercolor={0 1 0}
           ]{hyperref}

\begin{document}
\bstctlcite{IEEEexample:BSTcontrol}
    \title{Average-Power-Budgeted Underwater Vehicle Control via Constrained Reinforcement Learning}
  \author{Yinuo~Wang, Xiaowen~Tao$^{\ast}$, Yuze~Liu, and John~V.~Ringwood,~\IEEEmembership{Life~Fellow,~IEEE}

  \thanks{Yinuo Wang is with the College of Graduate and Professional Studies, Trine University, Angola, IN 46703, USA. Xiaowen Tao is with the School of Computer Science and Statistics, Trinity College Dublin, Dublin, D02 PN40, Ireland. Yuze Liu is with the School of Engineering, Swinburne University of Technology, Melbourne, VIC 3122, Australia. John V. Ringwood is with the Centre for Ocean Energy Research (COER), Maynooth University, Kildare, W23 F2H6, Ireland.}
  \thanks{Yinuo Wang and Xiaowen Tao contributed equally to this work.}
  \thanks{$^{\ast}$Corresponding author: Xiaowen Tao (taox@tcd.ie).}

}

\markboth{IEEE Transactions on Systems, Man, and Cybernetics, July~2026}%
{Tao \MakeLowercase{\textit{et al.}}: Average-Power-Budgeted Underwater Vehicle Control via Constrained Reinforcement Learning}

\maketitle

\begin{abstract}
Autonomous underwater vehicles increasingly run long-duration missions from a fixed energy budget that propulsion rapidly depletes; on such endurance-limited platforms, lower average thruster power extends mission range. Classical model-based controllers hinge on hard-to-identify hydrodynamic parameters, motivating model-free reinforcement learning (RL), which attains station-keeping and trajectory tracking from interaction; but an RL policy optimizes only what its reward encodes, so rewarding task accuracy alone yields oscillatory, power-hungry actuation. Established remedies add an energy penalty governed by a dimensionless weight; lacking physical units, it cannot specify the average power in advance, does not transfer across vehicles or tasks, must be re-tuned by hand, and can even increase power when badly tuned. This paper instead formulates energy-efficient control as a constrained Markov decision process that bounds average thruster power by an explicit budget, in watts, solved with a proximal policy optimization (PPO)--Lagrangian algorithm whose dual variable adapts online toward that budget. The budget thus acts as a direct, physical knob on average power, applied unchanged across vehicles and tasks without per-setting weight search. Across three vehicles and four tasks in MarineGym, the constrained policy draws the lowest average power in all twelve settings, 14--32\% below a task-only baseline and below an energy-reward baseline everywhere, while remaining smoothest in eleven and preserving task success in most. More broadly, treating the average-power budget as an explicit, adaptive constraint offers a declarable, tuning-free route to energy-efficient control for long-endurance ocean missions. A demonstration video is available at \textbf{\textcolor{blue}{\url{https://youtu.be/j-CF4p_G2Vo}}}.
\end{abstract}

\begin{IEEEkeywords}
Autonomous underwater vehicle (AUV), reinforcement learning, constrained Markov decision process, energy efficiency.
\end{IEEEkeywords}

\IEEEpeerreviewmaketitle

\section{Introduction}

\IEEEPARstart{U}{nderwater} robots have become a focal point of marine-robotics research, increasingly deployed for ocean exploration, environmental monitoring, seabed survey, and the inspection of submerged infrastructure~\cite{er2024intelligent,wang2025editorial}. Such platforms, spanning autonomous underwater vehicles (AUVs) and remotely operated vehicles (ROVs), run from a fixed onboard energy budget that propulsion rapidly depletes~\cite{fossen2011handbook}: a deployed vehicle cannot recharge, and the dense, viscous medium makes propulsion the dominant energy draw. For an endurance-limited platform, a controller that completes its task while drawing less average power directly extends mission range and dive duration, so the control problem is two-sided: the vehicle must hold a setpoint or follow a reference while spending as little energy as possible. Because propulsion dominates the draw on a fixed onboard budget, the goal is to lower the average power the controller draws, not to cap the instantaneous power at any step: a lower average, sustained over a dive, stretches the same battery to a longer mission.

Classical model-based and proportional--integral--derivative (PID) controllers have long served underwater motion control, but they hinge on careful hand tuning and accurate hydrodynamic parameters that are notoriously hard to identify for the nonlinear, strongly coupled dynamics of underwater vehicles~\cite{fossen2011handbook}. Reinforcement learning (RL) offers a model-free alternative that learns a control policy directly from interaction and can track references well without an explicit model~\cite{carlucho2018adaptive}. An RL policy, however, optimizes exactly what its reward specifies: crediting task accuracy alone yields high-gain, oscillatory, actuation that reaches the target quickly, but wastes energy and stresses the thrusters, the very behavior an endurance-limited vehicle cannot afford.

The standard way to make a policy energy-aware adds an energy penalty to the reward~\cite{wen2024adaptive}, fixing the trade-off between task accuracy and average power through a single dimensionless weight. This weight has three drawbacks. It carries no physical units, so the resulting average power cannot be declared in advance and is known only after training. Its meaning depends on each vehicle's actuation and dynamics, so it must be re-specified by hand for every vehicle and task. And the search is fragile: a badly tuned weight can even increase average power relative to a task-only baseline, as our experiments confirm (see Section~\ref{sec:exp}). This raises the central question we address: \emph{Can the energy objective be modified from a hand-tuned, dimensionless reward weight into a control specification that is declared in watts, met automatically by a constraint, and applied by one procedure unchanged across vehicles and tasks, all without sacrificing task performance?}

We answer the question by treating average power as a budget rather than a reward term. We cast energy-efficient control as a constrained Markov decision process (CMDP)~\cite{altman1999constrained} that maximizes the task reward, subject to an explicit limit on the average thruster power (the time-average over a dive, not the instantaneous power at any step), and solves it with a proximal policy optimization (PPO)-Lagrangian algorithm~\cite{schulman2017proximal,ray2019benchmarking}. A single dual variable, updated online by dual ascent~\cite{stooke2020responsive}, takes the place of the hand-set weight: an average-power budget in watts is declared, a direct physical knob on the average-power operating point, and the dual variable adapts automatically to drive the policy's average power toward it, separately for each vehicle and task. The energy objective thus becomes a specification in physical units, pursued without any case-specific weight search. Fig.~\ref{fig:framework} gives an overview of the resulting framework, from the vehicle plant and thruster power model to the constrained formulation, the PPO-Lagrangian solver, and the multi-vehicle, multi-task validation.

\begin{figure*}[t]
\centering
\includegraphics[width=0.85\textwidth]{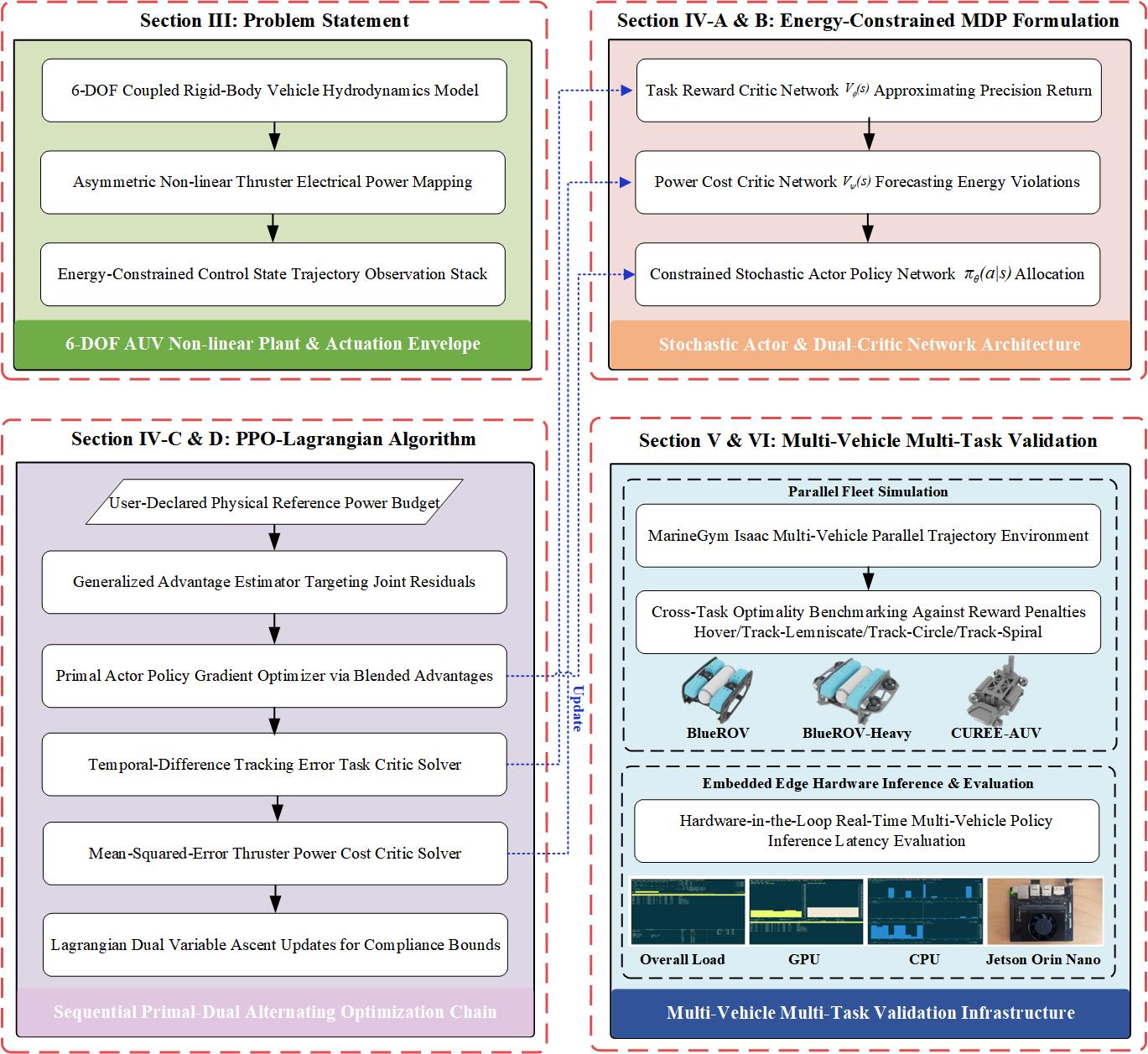}
\caption{Overview of the proposed average-power-budgeted underwater control framework.}
\label{fig:framework}
\end{figure*}

In summary, this paper makes the following contributions:
\begin{itemize}
\item \textbf{Reframing energy efficiency as a declarable budget.} We treat thruster average-power not as one more reward term but as an explicit budget in watts, posing energy-efficient underwater control as a constrained problem. To our knowledge, this is the first time underwater vehicle energy efficiency is cast as an explicit average-power-budget constraint.
\item \textbf{An average-power-constrained formulation solved by PPO-Lagrangian.} We bound the modeled average thruster power by a budget in watts and solve the resulting CMDP with a PPO-Lagrangian method. Unlike a reward penalty, the target average power is declared in advance in physical units, and an online dual variable replaces the hand-tuned weight, removing the per-vehicle, per-task weight search.
\item \textbf{A reproducible training framework with verified on-board efficiency.} We implement the full pipeline in the graphics processing unit (GPU)-accelerated MarineGym simulator~\cite{chu2025marinegym} and apply the same average-power-budgeting procedure across all vehicles and tasks; profiling on an embedded NVIDIA Jetson platform confirms the learned controller runs in real time, the constraint acting only during training and adding no inference-time cost.
\item \textbf{Controlled multi-vehicle, multi-task evidence.} Across three vehicles and four tasks, varying only the energy treatment, the constrained policy uses the least average power in every setting (14--32\% below the task-only baseline) and is the smoothest in most, while preserving task success and keeping tracking accuracy comparable to the baselines in most; the reward penalty, by contrast, is inconsistent and sometimes raises average power.
\end{itemize}

The remainder of this paper is organized as follows. Section~\ref{sec:related} reviews RL for underwater control, energy-aware control, and constrained RL. Section~\ref{sec:problem} states the physical control problem. Section~\ref{sec:method} formulates the CMDP and the PPO-Lagrangian algorithm. Section~\ref{sec:setup} describes the vehicles, tasks, simulator, and evaluation metrics. Section~\ref{sec:exp} presents and analyzes the results, while Section~\ref{sec:conclusion} concludes.

\section{Related Work}
\label{sec:related}

\subsection{Rule-Based and Model-Based Control for Underwater Vehicles}
Classical underwater motion control is predominantly model-based and hand-tuned. Proportional--integral--derivative loops, multivariable and adaptive sliding-mode autopilots~\cite{healey1993multivariable,cristi1990adaptive}, command-filtered backstepping~\cite{wang2018threedim}, Lyapunov-based model predictive control~\cite{shen2018trajectory}, and optimal-control trajectory planning~\cite{spangelo1994trajectory} have all been demonstrated for diving, steering, path following, and trajectory tracking, with observer-based designs added to reject unmeasured states and disturbances~\cite{peng2018output}; more recent work in this journal adds saturation-based adaptive tracking~\cite{tijjani2026saturation}, prescribed-performance and event-triggered control~\cite{shi2024event,batmani2021event,li2023discrete}, finite-time fault-tolerant control under thruster faults and input saturation~\cite{ali2025event}, and cooperative and formation control of multiple vehicles~\cite{wang2022cooperative,yao2025distributed}, surveyed alongside line-of-sight guidance for marine vehicles~\cite{gu2023advances,er2024intelligent}. Their common dependence, however, is on an accurate dynamic model: the added-mass, Coriolis, and nonlinear damping terms of the underwater equations of motion are strongly coupled and notoriously difficult to identify~\cite{fossen2011handbook}, so these controllers require careful per-vehicle modeling and gain tuning, and degrade when the hydrodynamics drift. This difficulty is the standing motivation for model-free methods that learn a controller directly from interaction.

\subsection{Reinforcement Learning for Underwater Vehicle Control}
Reinforcement learning sidesteps explicit modeling by optimizing a control policy directly from interaction, and has become a competitive alternative for underwater control~\cite{wang2025editorial}. Actor--critic and policy-gradient methods learn to stabilize and track references under disturbances without an identified model, including adaptive neural-network controllers that absorb input nonlinearities~\cite{cui2017adaptive}, deterministic policy gradients for depth and trajectory tracking~\cite{wu2019depth,shi2019multipseudo}, learning-based station keeping~\cite{knudsen2019deep}, and deep policies for path following, target tracking, and output-constrained control~\cite{carlucho2018adaptive,zhang2020interactive,wang2022target,elhaki2021saturated,ma2024neural}, with proximal-policy-optimization path planning and tracking~\cite{he2022asynchronous} and twin-delayed deep deterministic policy gradients for continuous-action path planning~\cite{chu2024local} among recent examples in this journal. GPU-accelerated simulators such as MarineGym~\cite{chu2025marinegym}, built on Isaac Gym~\cite{makoviychuk2021isaac}, make large-scale multi-vehicle training practical. A policy, however, optimizes only what its reward encodes, and the bulk of this work rewards task accuracy alone, leaving thruster energy uncosted. The few energy-aware variants append an energy or action-effort penalty to the reward~\cite{wen2024adaptive}, an idea also pursued for energy-aware robotic manipulation in infrastructure operation~\cite{tao2026energy}, but this scalarization carries no physical units: the resulting average power cannot be declared in advance, the weight value does not transfer across vehicles or tasks, and a badly tuned weight can even raise the average power, as we quantify in Section~\ref{sec:exp}. An average-power budget is a resource limit, and a penalty term cannot express a limit, only a balance, which points to treating average power as an explicit constraint.

\subsection{Constrained Reinforcement Learning}
Constrained Markov decision processes~\cite{altman1999constrained} maximize reward subject to explicit bounds on auxiliary cumulative costs, the natural language for a budget, and recent theory shows the constrained-RL problem has zero duality gap, justifying primal--dual solutions~\cite{paternain2019zerogap}. A mature toolkit follows: the constrained policy optimization (CPO) algorithm enforces the bound within a trust region~\cite{achiam2017constrained}; Lagrangian and interior-point methods convert the bound into an adaptive penalty whose multiplier is learned online~\cite{tessler2019reward,liu2020ipo}, with PID-style multiplier updates improving stability~\cite{stooke2020responsive}; and Lyapunov-based schemes guarantee feasibility throughout training~\cite{chow2018lyapunov}. Yet, this machinery is almost exclusively developed and evaluated for \emph{safety}, that is, collision and state-constraint satisfaction on locomotion and manipulation benchmarks~\cite{ray2019benchmarking,garcia2015comprehensive,yang2023barrier}, while the rare uses of constrained RL as an \emph{energy} budget target ground systems, such as Lagrangian energy management for hybrid electric vehicles~\cite{zhang2021lagrangian}, not underwater platforms. 

To our knowledge, no prior work casts average thruster power as an explicit, declarable budget enforced by a constraint for underwater vehicles, nor evaluates such a formulation across multiple vehicles and tasks; this is the gap we fill. Table~\ref{tab:related_work} situates our work against representative approaches along two axes: modeling and scope, namely whether a method is model-free, validated underwater, and evaluated across multiple vehicles; and how energy is specified, namely whether it is targeted at all (energy-aware), formulated as a constraint rather than a reward penalty (constraint-based), governed by an adaptive multiplier rather than a fixed weight, or declarable a priori in watts.

\begin{table*}[tb]
\centering
\caption{Positioning the proposed controller among representative approaches to energy in learning-based and model-based motion control. $\checkmark$, $\sim$, and $\times$ denote explicit, partial, and no support; the columns are defined in the text.}
\label{tab:related_work}
\small
\renewcommand{\arraystretch}{1.2}
\setlength{\tabcolsep}{3pt}
\begin{tabular}{@{}p{0.32\linewidth} *{7}{c}@{}}
\toprule
\textbf{Approach} & Model-free & Underwater & Multi-veh. & Energy-aware & Constraint & Adaptive & Declarable~(W) \\
\midrule
Deep RL, task reward~\cite{carlucho2018adaptive,wu2019depth,cui2017adaptive,ma2024neural} & $\checkmark$ & $\checkmark$ & $\times$ & $\times$ & $\times$ & $\times$ & $\times$ \\
Deep RL, energy-penalty reward~\cite{knudsen2019deep,wen2024adaptive} & $\checkmark$ & $\checkmark$ & $\times$ & $\checkmark$ & $\times$ & $\times$ & $\times$ \\
Model-based / optimal control~\cite{fossen2011handbook,spangelo1994trajectory,shen2018trajectory} & $\times$ & $\checkmark$ & $\times$ & $\checkmark$ & $\sim$ & $\times$ & $\checkmark$ \\
Constrained RL, trust region (CPO)~\cite{achiam2017constrained,liu2020ipo} & $\checkmark$ & $\times$ & $\sim$ & $\times$ & $\checkmark$ & $\checkmark$ & $\sim$ \\
Constrained RL, Lagrangian~\cite{altman1999constrained,ray2019benchmarking,stooke2020responsive,tessler2019reward,yang2023barrier,zhang2021lagrangian} & $\checkmark$ & $\times$ & $\sim$ & $\times$ & $\checkmark$ & $\checkmark$ & $\sim$ \\
\textbf{Ours} & $\checkmark$ & $\checkmark$ & $\checkmark$ & $\checkmark$ & $\checkmark$ & $\checkmark$ & $\checkmark$ \\
\bottomrule
\end{tabular}
\end{table*}

\section{Problem Statement}
\label{sec:problem}

\subsection{Vehicle Dynamics}
In the body-fixed frame, the six-degree-of-freedom rigid-body, and hydrodynamic, motion of an underwater vehicle obeys the Fossen model~\cite{fossen2011handbook}
\begin{align}
\dot{\boldsymbol{\eta}} &= \boldsymbol{J}(\boldsymbol{\eta})\,\boldsymbol{\nu}, \label{eq:kinematics}\\
\boldsymbol{M}\dot{\boldsymbol{\nu}} + \boldsymbol{C}(\boldsymbol{\nu})\boldsymbol{\nu} + \boldsymbol{D}(\boldsymbol{\nu}_r)\,\boldsymbol{\nu}_r + \boldsymbol{g}(\boldsymbol{\eta}) &= \boldsymbol{\tau} = \boldsymbol{B}\,\boldsymbol{u}, \label{eq:dynamics}
\end{align}
where $\boldsymbol{\eta}=[x,y,z,\phi,\theta,\psi]^\top$ is the inertial pose, $\boldsymbol{\nu}=[\boldsymbol{\nu}_1^\top,\boldsymbol{\nu}_2^\top]^\top=[u,v,w,p,q,r]^\top$ the body twist with linear part $\boldsymbol{\nu}_1$ and angular part $\boldsymbol{\nu}_2$, $\boldsymbol{\nu}_r=\boldsymbol{\nu}-\boldsymbol{\nu}_c$ its value relative to the ocean current $\boldsymbol{\nu}_c$, and $\boldsymbol{u}=[f_1,\dots,f_M]^\top$ the $M$ thruster forces mapped by $\boldsymbol{B}$ to the body-frame generalized force $\boldsymbol{\tau}$, that is, the stacked force and torque acting on the vehicle. The kinematic transform is block-diagonal in the body-to-inertial rotation $\boldsymbol{R}(\boldsymbol{\Theta})$ and the Euler-rate map $\boldsymbol{T}(\boldsymbol{\Theta})$, with $\boldsymbol{\Theta}=[\phi,\theta,\psi]^\top$,
\begin{equation}
\boldsymbol{J}(\boldsymbol{\eta})=\begin{bmatrix}\boldsymbol{R}(\boldsymbol{\Theta}) & \boldsymbol{0}_3\\[2pt] \boldsymbol{0}_3 & \boldsymbol{T}(\boldsymbol{\Theta})\end{bmatrix}.
\label{eq:transform}
\end{equation}
The generalized inertia and Coriolis matrices split into rigid-body and added-mass parts, $\boldsymbol{M}=\boldsymbol{M}_{\mathrm{RB}}+\boldsymbol{M}_A$ and $\boldsymbol{C}=\boldsymbol{C}_{\mathrm{RB}}+\boldsymbol{C}_A$, with
\begin{equation}
\begin{aligned}
\boldsymbol{M}_{\mathrm{RB}}&=\begin{bmatrix} m\boldsymbol{I}_3 & -m\boldsymbol{S}(\boldsymbol{r}_g)\\[2pt] m\boldsymbol{S}(\boldsymbol{r}_g) & \boldsymbol{I}_g \end{bmatrix},\\[2pt]
\boldsymbol{M}_A&=-\operatorname{diag}(X_{\dot u},Y_{\dot v},Z_{\dot w},K_{\dot p},M_{\dot q},N_{\dot r}),
\end{aligned}
\label{eq:massmats}
\end{equation}
\begin{equation}
\boldsymbol{C}_A(\boldsymbol{\nu})=-\begin{bmatrix} \boldsymbol{0}_3 & \boldsymbol{S}(\boldsymbol{M}_{A,t}\boldsymbol{\nu}_1)\\[2pt] \boldsymbol{S}(\boldsymbol{M}_{A,t}\boldsymbol{\nu}_1) & \boldsymbol{S}(\boldsymbol{M}_{A,r}\boldsymbol{\nu}_2) \end{bmatrix},
\label{eq:coriolis}
\end{equation}
where $\boldsymbol{S}(\cdot)$ is the skew-symmetric cross-product operator, $m,\boldsymbol{I}_g,\boldsymbol{r}_g$ the mass, inertia tensor, and center of gravity, respectively, $X_{\dot u},\dots,N_{\dot r}$ the added-mass coefficients, and $\boldsymbol{M}_{A,t},\boldsymbol{M}_{A,r}$ the translational and rotational blocks of $\boldsymbol{M}_A$, respectively. Hydrodynamic drag superposes linear and quadratic damping on the relative velocity,
\begin{equation}
\boldsymbol{D}(\boldsymbol{\nu}_r)=\boldsymbol{D}_{\mathrm{l}}+\boldsymbol{D}_{\mathrm{q}}\,\lvert\boldsymbol{\nu}_r\rvert,
\label{eq:damping}
\end{equation}
and gravity and buoyancy give the restoring generalized force
\begin{equation}
\boldsymbol{g}(\boldsymbol{\eta})=\begin{bmatrix}
(W\!-\!B)\sin\theta\\
-(W\!-\!B)\cos\theta\sin\phi\\
-(W\!-\!B)\cos\theta\cos\phi\\
-z_bB\cos\theta\sin\phi\\
-z_bB\sin\theta\\
0
\end{bmatrix},
\label{eq:restoring}
\end{equation}
with weight $W$, buoyancy $B$, and center-of-buoyancy offset $z_b$. We take this standard model as the given plant; its strongly coupled, hard-to-identify, hydrodynamics are what motivate the model-free controller of Section~\ref{sec:method}.

\subsection{Thruster Power Model}
Energy efficiency is defined on the electrical power the thrusters actually draw, not on a proxy. All three vehicles are driven by Blue Robotics T200 thrusters~\cite{bluerobotics_t200}, so a single power map applies throughout. The policy does not command thrust directly: each thruster receives a normalized command in $[-1,1]$, which the T200 datasheet maps successively to a rotational speed, a thrust force $f$, and the electrical power drawn. Composing these datasheet maps expresses the per-thruster power as an asymmetric power law in the produced thrust,
\begin{equation}
P(f) =
\begin{cases}
a_{\mathrm{f}}\,f^{\,b_{\mathrm{f}}}, & f \ge 0,\\[2pt]
a_{\mathrm{r}}\,|f|^{\,b_{\mathrm{r}}}, & f < 0,
\end{cases}
\label{eq:t200}
\end{equation}
with forward coefficients $(a_{\mathrm{f}},b_{\mathrm{f}})$ and reverse coefficients $(a_{\mathrm{r}},b_{\mathrm{r}})$ obtained from the datasheet; both exponents exceed one, so power grows super-linearly with thrust and, for equal magnitude, is higher in reverse. Each per-thruster thrust $f_{i,t}$ is produced by the policy's throttle command through these datasheet maps, so the instantaneous power at step $t$ is the sum over the $M$ thrusters, $c_t = \sum_{i=1}^{M} P(f_{i,t})$ (in watts), and its time-average over the task duration $T$ (the episode horizon) is the average power
\begin{equation}
\bar{P} = \frac{1}{T}\sum_{t=0}^{T-1} c_t .
\label{eq:power}
\end{equation}
Here $c_t$ is a modeled electrical power in watts, not a measured current, so a limit placed on it carries a physical unit. We keep three quantities distinct throughout: the instantaneous power $c_t$, its task-average the average power $\bar P$, and the onboard energy budget, the battery capacity in joules that ultimately limits endurance. For a fixed task duration $T$, the energy drawn over a task is $E=\bar P\,T$, so bounding $\bar P$ caps the per-task energy and, on a fixed battery, extends endurance in proportion. We therefore constrain the average power $\bar P$, which is declarable in watts and shared across tasks of different length, rather than a task-dependent energy in joules; this is what later allows the average-power budget to be declared in advance, a property no dimensionless penalty weight possesses.

\subsection{Optimization Objective}
The control problem is two-sided. Under the dynamics~\eqref{eq:kinematics}--\eqref{eq:dynamics}, the vehicle must perform its task, namely station-keeping at a setpoint or tracking a reference trajectory, while holding its average thruster power within a declared budget,
\begin{equation}
\underset{\text{controller}}{\text{minimize}}\ \ \text{task error}\qquad\text{subject to}\qquad \bar{P}\le d ,
\label{eq:problem}
\end{equation}
where the task error is the distance to the setpoint or reference, $\bar P$ is the average power of~\eqref{eq:power}, and $d$ is the budget in watts. Equation~\eqref{eq:problem} states only what is required: the task, achieved under an explicit and physically meaningful average-power limit. How to realize it, in particular the choice to keep the task objective intact and impose the average-power limit as a constraint solved by reinforcement learning, is the subject of Section~\ref{sec:method}.

\section{Average-Power-Constrained Policy Optimization}
\label{sec:method}

\subsection{Underwater Control as an MDP}
We model the task of Section~\ref{sec:problem} as a discrete-time Markov decision process (MDP) $(\mathcal{S},\mathcal{A},P,r,\gamma)$, obtained by sampling~\eqref{eq:kinematics}--\eqref{eq:dynamics} at the control rate, with transition kernel $P(\boldsymbol{s}_{t+1}\mid\boldsymbol{s}_t,\boldsymbol{a}_t)$ induced by the integrated dynamics and the domain randomization of Section~\ref{sec:setup}. The state $\boldsymbol{s}_t\in\mathcal{S}$ stacks the vehicle pose and twist $(\boldsymbol{\eta},\boldsymbol{\nu})$, the body-axis orientation, and the previous thruster command with the task reference (the hover setpoint or the upcoming waypoints); the action $\boldsymbol{a}_t\in\mathcal{A}$ is the vector of normalized thruster commands in $[-1,1]$ mapped to the forces $\boldsymbol{u}$ through $\boldsymbol{B}$; and the policy $\pi_\theta(\boldsymbol{a}_t\mid\boldsymbol{s}_t)$ is a Gaussian with neural-network mean and state-independent log-standard-deviation. The task reward $r_t = r(\boldsymbol{s}_t,\boldsymbol{a}_t)$ rewards proximity to the setpoint or reference, with auxiliary shaping for uprightness and low body spin. This reward is \emph{identical} across the three methods we compare, so that any difference in behavior is attributable solely to how energy is treated, the premise of the controlled comparison in Section~\ref{sec:exp}.

For constraint analysis, we use the discounted state--action occupancy measure induced by $\pi$,
\begin{equation}
\rho^\pi(\boldsymbol{s},\boldsymbol{a}) = (1-\gamma)\sum_{t\ge0}\gamma^{t}\,\Pr{}_{\!\pi}\!\bigl(\boldsymbol{s}_t=\boldsymbol{s},\,\boldsymbol{a}_t=\boldsymbol{a}\bigr),
\label{eq:occupancy}
\end{equation}
and, for either per-step signal $z\in\{r,c\}$ (task reward $r_t$ or power cost $c_t$ of~\eqref{eq:t200}), its return, state value, action value, and advantage
\begin{equation}
\begin{aligned}
J_z(\pi) &= \tfrac{1}{1-\gamma}\,\mathbb{E}_{(\boldsymbol{s},\boldsymbol{a})\sim\rho^\pi}\!\bigl[z(\boldsymbol{s},\boldsymbol{a})\bigr],\\
V^z_\pi(\boldsymbol{s}) &= \mathbb{E}_\pi\!\Bigl[\textstyle\sum_{k\ge0}\gamma^{k} z_{t+k}\,\big|\,\boldsymbol{s}_t=\boldsymbol{s}\Bigr],\\
Q^z_\pi(\boldsymbol{s},\boldsymbol{a}) &= z(\boldsymbol{s},\boldsymbol{a}) + \gamma\,\mathbb{E}_{\boldsymbol{s}'\sim P}\!\bigl[V^z_\pi(\boldsymbol{s}')\bigr],\\
A^z_\pi(\boldsymbol{s},\boldsymbol{a}) &= Q^z_\pi(\boldsymbol{s},\boldsymbol{a}) - V^z_\pi(\boldsymbol{s}),
\end{aligned}
\label{eq:valuefns}
\end{equation}
so that $z=r$ gives the discounted task return $J_r$, the maximization objective. The constrained quantity is the average thruster power $J_c(\pi)\triangleq\mathbb{E}_{(\boldsymbol{s},\boldsymbol{a})\sim\rho^\pi}[c]$, in watts, the occupancy-weighted mean of the per-step power $c_t$; being linear in $\rho^\pi$, it makes the constrained problem convex in $\rho^\pi$, a fact we use in Section~\ref{sec:theory}. During training $J_c$ is estimated by the empirical episode-average power $\hat{J}_c=\tfrac{1}{T}\sum_{t=0}^{T-1}c_t=\bar P$ of~\eqref{eq:power}, which the dual update drives toward $d$ and which Section~\ref{sec:setup} reports; the occupancy-weighted mean $J_c$ and this uniform episode average coincide as $\gamma\to1$ (here $\gamma=0.99$).

\subsection{Average Power as an Explicit Constraint}
We keep the task reward intact and impose the average-power budget of~\eqref{eq:problem} as an explicit constraint, yielding the constrained MDP~\cite{altman1999constrained}
\begin{equation}
\max_{\pi}\; J_r(\pi)\quad\text{s.t.}\quad J_c(\pi)\le d ,
\label{eq:cmdp}
\end{equation}
where $J_c(\pi)$ is the average thruster power defined above and $d$ the watt budget of~\eqref{eq:problem}. Casting the constraint this way gives the objective two properties at once. First, it is \emph{declarable}: the average operating power is fixed in advance, in physical units, rather than emerging post hoc from training. Second, the constraint acts on the \emph{correct operand}, the modeled power $c_t$ of~\eqref{eq:t200}, rather than on a surrogate of action magnitude. The resulting controller, PPO-Lag, solves~\eqref{eq:cmdp}; Section~\ref{sec:exp} evaluates it against task-only and reward-penalty baselines defined there.

\subsection{PPO-Lagrangian}
\emph{Cost critic.} The constraint in~\eqref{eq:cmdp} is handled online by a dedicated cost critic $V^c_\psi$ that bootstraps the budget-centered power $c_t-d$; its Bellman operator is a $\gamma$-contraction, so the critic is well posed (Lemma~\ref{lem:contraction}). In practice $V^c_\psi$ is fit by regression to the cost returns, and the cost advantage is formed by generalized advantage estimation (GAE)~\cite{schulman2016high} on the budget-centered temporal-difference residual,
\begin{equation}
\delta^c_t = (c_t-d) + \gamma V^c_\psi(\boldsymbol{s}_{t+1}) - V^c_\psi(\boldsymbol{s}_t),
\qquad
A^c_t = \sum_{l\ge0}(\gamma\kappa)^{l}\,\delta^c_{t+l},
\label{eq:gae}
\end{equation}
with GAE parameter $\kappa$; the reward advantage $A^r_t$ is computed identically from $\delta^r_t = r_t + \gamma V^r_\phi(\boldsymbol{s}_{t+1}) - V^r_\phi(\boldsymbol{s}_t)$. Because $\delta^c_t$ is centered at the budget, $A^c_t>0$ signals that the policy is locally over budget.

\emph{Lagrangian and saddle point.} We solve~\eqref{eq:cmdp} through its Lagrangian~\cite{altman1999constrained,ray2019benchmarking}
\begin{equation}
\mathcal{L}(\pi,\lambda) = J_r(\pi) - \lambda\bigl(J_c(\pi)-d\bigr),\qquad \lambda\ge 0 ,
\label{eq:lagrangian}
\end{equation}
and seek the saddle point $\max_{\pi}\min_{\lambda\ge0}\mathcal{L}(\pi,\lambda)$ by alternating a primal policy update at fixed $\lambda$ with a dual update of $\lambda$.

\emph{Primal update.} For fixed $\lambda$, $\mathcal{L}(\cdot,\lambda)$ is an unconstrained RL objective with the composite per-step signal $r_t-\lambda(c_t-d)$. Applying the policy-gradient theorem to the value functions~\eqref{eq:valuefns} and using the linearity of the advantage in this signal,
\begin{equation}
\nabla_\theta\mathcal{L}(\pi_\theta,\lambda)
= \mathbb{E}_{(\boldsymbol{s},\boldsymbol{a})\sim\rho^{\pi_\theta}}\!\Bigl[\nabla_\theta\log\pi_\theta(\boldsymbol{a}\!\mid\!\boldsymbol{s})\,\bigl(A^r_t - \lambda A^c_t\bigr)\Bigr],
\label{eq:pg}
\end{equation}
so the reward and cost advantages enter through a single Lagrangian advantage
\begin{equation}
A_t(\lambda) = A^r_t - \lambda\, A^c_t ,
\label{eq:surrogate}
\end{equation}
with $A^r_t,A^c_t$ the estimates of~\eqref{eq:gae}. We realize the ascent in~\eqref{eq:pg} by the clipped PPO surrogate~\cite{schulman2017proximal}
\begin{equation}
\begin{aligned}
L^{\mathrm{CLIP}}(\theta) = \mathbb{E}_t\Bigl[\min\bigl(&\varrho_t(\theta)\,A_t(\lambda),\\
&\mathrm{clip}(\varrho_t(\theta),1\!-\!\epsilon,1\!+\!\epsilon)\,A_t(\lambda)\bigr)\Bigr],
\end{aligned}
\label{eq:clip}
\end{equation}
where $\varrho_t(\theta)=\pi_\theta(\boldsymbol{a}_t\!\mid\!\boldsymbol{s}_t)/\pi_{\theta_{\mathrm{old}}}(\boldsymbol{a}_t\!\mid\!\boldsymbol{s}_t)$ is the importance ratio and $\epsilon$ the clip range, while $V^r_\phi,V^c_\psi$ are fit by regression to their respective returns.

\emph{Dual update.} The multiplier performs subgradient ascent on the dual function $g(\lambda)=\max_\pi\mathcal{L}(\pi,\lambda)$, whose subgradient is exactly the constraint violation (Proposition~\ref{prop:duality}),
\begin{equation}
\lambda \leftarrow \mathrm{clip}\!\bigl(\lambda + \eta_\lambda\,(\hat{J}_c-d),\; \lambda_{\min},\, \lambda_{\max}\bigr),
\label{eq:dual}
\end{equation}
with $\hat{J}_c$ the batch-mean average power and $\eta_\lambda$ the dual step size. To keep $\lambda>0$ without a barrier, our implementation parameterizes the multiplier in log-space, $\nu=\log\lambda$, and updates $\nu$ by minimizing the surrogate $\mathcal{L}_\lambda=-\nu\,(\hat{J}_c-d)$ (whose gradient is $-(\hat{J}_c-d)$), which is the ascent step
\begin{equation}
\begin{aligned}
\nu &\leftarrow \nu + \eta_\lambda\,(\hat{J}_c-d),\\
\lambda &= \exp\!\bigl(\mathrm{clip}(\nu,\,\log\lambda_{\min},\,\log\lambda_{\max})\bigr),
\end{aligned}
\label{eq:logspace}
\end{equation}
a log-space variant of~\eqref{eq:dual} that raises $\lambda$ when the batch is over budget ($\hat{J}_c>d$) and lowers it otherwise, with $\nu$ clipped so that $\lambda\in[\lambda_{\min},\lambda_{\max}]$ throughout~\cite{stooke2020responsive}; in practice $\nu$ is carried by the Adam optimizer~\cite{kingma2015adam}.

\emph{Networks and overall objective.} The actor $\pi_\theta$ and the two critics $V^r_\phi,V^c_\psi$ are multilayer perceptrons (MLPs) trained jointly. Over each rollout, we minimize
\begin{equation}
\begin{aligned}
\mathcal{J}(\theta,\phi,\psi) ={}& -L^{\mathrm{CLIP}}(\theta) + c_v\bigl(\mathcal{L}^{V}_r(\phi)+\mathcal{L}^{V}_c(\psi)\bigr)\\
&- c_e\,\mathbb{E}_t\!\bigl[\mathcal{H}\bigl(\pi_\theta(\cdot\mid\boldsymbol{s}_t)\bigr)\bigr],
\end{aligned}
\label{eq:totalloss}
\end{equation}
where $\mathcal{H}$ is the policy entropy with coefficient $c_e$ (sustaining exploration), $c_v$ weights the value losses, and each critic uses the PPO clipped-regression loss toward its return target $\hat{G}_t$,
\begin{equation}
\mathcal{L}^{V} = \mathbb{E}_t\!\Bigl[\max\!\bigl((V-\hat{G}_t)^2,\ (\mathrm{clip}(V,V_{\mathrm{old}}\!-\!\epsilon,V_{\mathrm{old}}\!+\!\epsilon)-\hat{G}_t)^2\bigr)\Bigr],
\label{eq:valueloss}
\end{equation}
evaluated with the reward and cost returns $\hat{G}^r_t,\hat{G}^c_t$ for $V^r_\phi$ and $V^c_\psi$. The actor and critics are optimized by Adam over several epochs of minibatches per rollout, and the multiplier $\nu$ by its own dual step~\eqref{eq:logspace}.

What makes this effective is that $\lambda$ is not a hyperparameter but a \emph{state of the optimizer}: it multiplies the budget-centered signal $c_t-d$, rises when the batch exceeds the watt budget $d$ and falls otherwise, and converges to the value that drives the average power toward the budget, separately for each vehicle and task. The only fixed quantities are the dual step $\eta_\lambda$ and the bounds $[\lambda_{\min},\lambda_{\max}]$, held constant across all settings; what is eliminated is the per-setting search for the trade-off itself, which $\lambda$ discovers online.

\subsection{Theoretical Properties}
\label{sec:theory}
We first note that the cost critic is well posed (Lemma~\ref{lem:contraction}), then analyze the saddle point~\eqref{eq:lagrangian} in two regimes. When the budget is feasible (Assumption~\ref{ass:reg}), the dual update drives the policy to it (Lemma~\ref{lem:bounded}, Propositions~\ref{prop:duality}--\ref{prop:violation}, Remark~\ref{rem:shadow}); when $d$ is set below what the task admits, the multiplier saturates and Remark~\ref{rem:saturation} governs the outcome. The statements are standard for constrained MDPs~\cite{altman1999constrained,paternain2019zerogap} and concern the idealized exact-gradient dynamics, qualified under clipped PPO updates and function approximation by Remark~\ref{rem:scope}.

We begin with the cost critic, which targets the fixed point of the cost Bellman operator $\mathcal{T}^\pi_c$ acting on any bounded $V:\mathcal{S}\to\mathbb{R}$ by
\begin{equation}
(\mathcal{T}^\pi_c V)(\boldsymbol{s}) = \mathbb{E}_{\boldsymbol{a}\sim\pi,\,\boldsymbol{s}'\sim P}\!\bigl[(c(\boldsymbol{s},\boldsymbol{a})-d) + \gamma\,V(\boldsymbol{s}')\bigr].
\label{eq:costbellman}
\end{equation}
\begin{lemma}[$\gamma$-contraction of the cost Bellman operator]
\label{lem:contraction}
For all bounded $V_1,V_2$, $\;\lVert\mathcal{T}^\pi_c V_1-\mathcal{T}^\pi_c V_2\rVert_\infty\le\gamma\,\lVert V_1-V_2\rVert_\infty$; hence $\mathcal{T}^\pi_c$ has a unique fixed point $V^c_\pi$, and value iteration $V_{k+1}=\mathcal{T}^\pi_c V_k$ converges to it geometrically.
\end{lemma}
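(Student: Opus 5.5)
The plan is the standard sup-norm argument followed by Banach's fixed-point theorem. First I will check that $\mathcal{T}^\pi_c$ maps the space $\mathcal{B}(\mathcal{S})$ of bounded real functions on $\mathcal{S}$ into itself, so that the contraction statement makes sense there.

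\emph{Bounded cost.} Since each action component is a normalized command in $[-1,1]$, the T200 maps send it to a thrust in a compact interval. The power law~\eqref{eq:t200} is continuous, so it is bounded on that interval, and hence $0\le c(\boldsymbol{s},\boldsymbol{a})\le \bar c<\infty$ for all $(\boldsymbol{s},\boldsymbol{a})$. It follows that $|c-d|\le \bar c+|d|$. For bounded $V$, the function $\mathcal{T}^\pi_c V$ is then bounded by $\bar c+|d|+\gamma\lVert V\rVert_\infty$.

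\emph{Contraction.} For bounded $V_1,V_2$ and any fixed $\boldsymbol{s}$, the budget-centered cost term $(c(\boldsymbol{s},\boldsymbol{a})-d)$ is common to both and cancels in the difference. What remains is
\begin{equation*}
(\mathcal{T}^\pi_c V_1-\mathcal{T}^\pi_c V_2)(\boldsymbol{s})=\gamma\,\mathbb{E}_{\boldsymbol{a}\sim\pi,\,\boldsymbol{s}'\sim P}\bigl[V_1(\boldsymbol{s}')-V_2(\boldsymbol{s}')\bigr].
\end{equation*}
Taking absolute values and moving them inside the expectation (Jensen's inequality, or simply monotonicity of the integral against a probability measure) bounds this by $\gamma\sup_{\boldsymbol{s}'}|V_1(\boldsymbol{s}')-V_2(\boldsymbol{s}')|=\gamma\lVert V_1-V_2\rVert_\infty$. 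Taking the supremum over $\boldsymbol{s}$ then gives the claim.

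\emph{Fixed point and convergence.} $(\mathcal{B}(\mathcal{S}),\lVert\cdot\rVert_\infty)$ is a complete metric space, and $\gamma<1$. Banach's fixed-point theorem therefore yields a unique fixed point $V^c_\pi$, together with the geometric rate $\lVert V_k-V^c_\pi\rVert_\infty\le\gamma^k\lVert V_0-V^c_\pi\rVert_\infty$. I will also note that this fixed point is the budget-centered discounted cost value $\mathbb{E}_\pi[\sum_k\gamma^k(c_{t+k}-d)\mid \boldsymbol{s}_t=\boldsymbol{s}]$, that is, $V^c_\pi$ of~\eqref{eq:valuefns} shifted by $-d/(1-\gamma)$. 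To see this, I will check that the series, which converges absolutely by boundedness, satisfies the Bellman equation~\eqref{eq:costbellman}.

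The only step needing real care is the boundedness of $c$. The contraction itself does not depend on it, but boundedness is needed so that $\mathcal{T}^\pi_c$ maps bounded functions to bounded functions. I will justify it from the compact command range and the continuity of the datasheet maps. Measurability of $V$ on a continuous state space is a minor technicality, which I will handle by working with bounded measurable functions, a space that is also complete.
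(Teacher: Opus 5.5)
Your proposal is correct and follows the same route as the paper: cancel the budget-centered cost term, bound $\gamma\,\bigl|\mathbb{E}[V_1(\boldsymbol{s}')-V_2(\boldsymbol{s}')]\bigr|$ by $\gamma\lVert V_1-V_2\rVert_\infty$, take the supremum over $\boldsymbol{s}$, and invoke Banach's fixed-point theorem. Your added checks are details the paper leaves implicit rather than a different argument: that $c$ is bounded, so $\mathcal{T}^\pi_c$ maps bounded (measurable) functions into themselves, and that the fixed point equals the budget-centered discounted cost value.
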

\begin{proof}
Subtracting two applications of~\eqref{eq:costbellman}, the action cost and budget terms cancel, leaving, for every $\boldsymbol{s}$,
\[
\begin{aligned}
\bigl|(\mathcal{T}^\pi_c V_1-\mathcal{T}^\pi_c V_2)(\boldsymbol{s})\bigr|
&= \gamma\,\bigl|\mathbb{E}_{\boldsymbol{s}'}[V_1(\boldsymbol{s}')-V_2(\boldsymbol{s}')]\bigr|\\
&\le \gamma\,\lVert V_1-V_2\rVert_\infty .
\end{aligned}
\]
Taking the supremum over $\boldsymbol{s}$ gives the contraction; since $\gamma<1$, the Banach fixed-point theorem~\cite{puterman1994markov} yields uniqueness and geometric convergence.
\end{proof}

\begin{assumption}[Regularity and feasibility]
\label{ass:reg}
The returns $J_r(\pi)$ and $J_c(\pi)$ are bounded and differentiable in the policy parameters; the dual step size $\eta_\lambda$ is sufficiently small; and there exists a strictly feasible policy, i.e.\ a policy $\pi$ with $J_c(\pi)<d$ (Slater's condition~\cite{boyd2004convex}).
\end{assumption}

\begin{lemma}[Bounded multiplier]
\label{lem:bounded}
For every iteration, the multiplier produced by~\eqref{eq:dual} satisfies $\lambda\in[\lambda_{\min},\lambda_{\max}]$.
\end{lemma}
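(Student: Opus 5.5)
The claim follows by induction on the iteration index, with the clip in~\eqref{eq:dual} as the only ingredient. The base case needs $\lambda_0\in[\lambda_{\min},\lambda_{\max}]$. Either the initial multiplier is chosen in the interval, or the statement is read as applying to every multiplier \emph{produced by}~\eqref{eq:dual}, so that only post-update values matter.

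For the inductive step, fix an iteration $k$. Suppose the current multiplier is $\lambda_k$, and let $\hat J_c$ be any batch estimate. Boundedness of $\hat J_c$ is not even required, only that it is a finite real number, which holds because each $c_t$ is a finite sum of finite powers $P(f_{i,t})$ with thrusts bounded by the saturated commands in $[-1,1]$. Then $y=\lambda_k+\eta_\lambda(\hat J_c-d)$ is a real number. The clip map $\mathrm{clip}(y,\lambda_{\min},\lambda_{\max})=\min\{\max\{y,\lambda_{\min}\},\lambda_{\max}\}$ takes values in $[\lambda_{\min},\lambda_{\max}]$ whenever $\lambda_{\min}\le\lambda_{\max}$. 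This gives a three-case check: $y<\lambda_{\min}$, $y>\lambda_{\max}$, or $y$ in between. Notably, the step does not use the inductive hypothesis at all, so the bound holds at every iteration regardless of the history.

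I will also cover the log-space implementation~\eqref{eq:logspace}, since that is what is actually run. The argument is similar. Here $\nu$ itself may drift (it is carried by Adam and is not clipped in storage), but the reported multiplier is $\lambda=\exp(\mathrm{clip}(\nu,\log\lambda_{\min},\log\lambda_{\max}))$. The clipped exponent lies in $[\log\lambda_{\min},\log\lambda_{\max}]$. Because $\exp$ is monotone increasing, $\lambda\in[\lambda_{\min},\lambda_{\max}]$. This requires $0<\lambda_{\min}\le\lambda_{\max}$ so that the logarithms are defined, and I will state that standing condition explicitly. As a corollary, $\lambda>0$ throughout.

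The only delicate point is bookkeeping rather than mathematics. I must make precise which quantity is bounded: the clipped $\lambda$, not the raw log-variable $\nu$ under Adam, and not a pre-clip intermediate. I must also state the hypotheses on the bounds, namely $\lambda_{\min}\le\lambda_{\max}$ in general and positivity for the log form. Once these are fixed, the proof is immediate.
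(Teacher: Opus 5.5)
Your proof is correct and is the same argument as the paper's, which simply observes that the clip (projection) in~\eqref{eq:dual} places every updated value in $[\lambda_{\min},\lambda_{\max}]$ at each step. Your additional bookkeeping (noting that the inductive hypothesis is never actually used, covering the log-space form~\eqref{eq:logspace} via monotonicity of $\exp$, and making $0<\lambda_{\min}\le\lambda_{\max}$ explicit) is a slightly more careful spelling-out of that one-line argument.
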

\begin{proof}
Immediate, from the projection in~\eqref{eq:dual}, which clips the updated value to $[\lambda_{\min},\lambda_{\max}]$ at every step.
\end{proof}

\noindent By Lemma~\ref{lem:bounded} the Lagrangian advantage~\eqref{eq:surrogate} is a bounded combination of $A^r_t$ and $A^c_t$ throughout training, so the multiplier cannot diverge; this is a stability property the log-space, clipped update shares with a fixed weight $w_E$ but a raw, unprojected dual ascent does not.

\begin{proposition}[Convex dual and strong duality]
\label{prop:duality}
The dual function $g(\lambda)=\max_{\pi}\mathcal{L}(\pi,\lambda)$ is convex in $\lambda$, and the update~\eqref{eq:dual} is a projected subgradient step on the dual problem $\min_{\lambda\ge0} g(\lambda)$. Under Assumption~\ref{ass:reg}, the constrained problem~\eqref{eq:cmdp} has zero duality gap in its occupancy-measure form, so a saddle point of~\eqref{eq:lagrangian} attains the constrained optimum of~\eqref{eq:cmdp}.
\end{proposition}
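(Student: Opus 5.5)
The plan has three parts: convexity of $g$, identification of the dual update as a projected subgradient step, and zero duality gap via the occupancy-measure reformulation.

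\emph{Convexity of $g$.} For each fixed policy $\pi$, the map $\lambda\mapsto\mathcal{L}(\pi,\lambda)=J_r(\pi)-\lambda\bigl(J_c(\pi)-d\bigr)$ is affine in $\lambda$. The dual function $g(\lambda)=\sup_\pi\mathcal{L}(\pi,\lambda)$ is therefore a pointwise supremum of affine functions, hence convex. It is finite because $J_r$ and $J_c$ are bounded (Assumption~\ref{ass:reg}).

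\emph{Subgradient and the dual update.} Fix $\lambda$ and let $\pi_\lambda$ be a maximizer of $\mathcal{L}(\cdot,\lambda)$; if the supremum is not attained, use an $\varepsilon$-maximizer and an $\varepsilon$-subgradient. For every $\lambda'\ge0$,
\[
g(\lambda')\ge\mathcal{L}(\pi_\lambda,\lambda')=g(\lambda)-(\lambda'-\lambda)\bigl(J_c(\pi_\lambda)-d\bigr).
\]
Hence $-(J_c(\pi_\lambda)-d)\in\partial g(\lambda)$. A subgradient descent step on $\min_{\lambda\ge0}g$ is then $\lambda\leftarrow\lambda+\eta_\lambda\bigl(J_c(\pi_\lambda)-d\bigr)$, followed by projection onto the feasible set. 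Replacing $J_c(\pi_\lambda)$ by its batch estimate $\hat J_c$, and the feasible set by the box $[\lambda_{\min},\lambda_{\max}]$ from Lemma~\ref{lem:bounded}, gives exactly~\eqref{eq:dual}. The box is a closed convex set, and it contains the dual optimum provided $\lambda_{\max}$ is large enough; I will state this as a side condition.

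\emph{Zero duality gap.} I will rewrite~\eqref{eq:cmdp} over occupancy measures. Writing $\rho\in\mathcal{K}$, the set of valid discounted occupancy measures, the problem becomes
\[
\max_\rho \tfrac{1}{1-\gamma}\langle\rho,r\rangle \quad\text{s.t.}\quad \langle\rho,c\rangle\le d .
\]
The set $\mathcal{K}$ is defined by the linear Bellman flow equations $\sum_a\rho(s',a)=(1-\gamma)\mu_0(s')+\gamma\sum_{s,a}P(s'\mid s,a)\rho(s,a)$ together with $\rho\ge0$, so it is convex. Every $\rho\in\mathcal{K}$ is realized by the stationary policy $\pi(a\mid s)\propto\rho(s,a)$. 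The objective and constraint are linear in $\rho$, so this is a convex (linear) program. Slater's condition from Assumption~\ref{ass:reg} supplies a $\rho$ with $\langle\rho,c\rangle<d$. Standard Lagrangian duality for convex programs~\cite{boyd2004convex} then gives zero duality gap and existence of a dual optimum $\lambda^\star\ge0$. Finally, because the policy-to-occupancy map is onto $\mathcal{K}$, the Lagrangian over $\pi$ and over $\rho$ take the same values, so the same holds for the original problem~\eqref{eq:cmdp}; this is the argument of~\cite{paternain2019zerogap}.

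\emph{Saddle points.} If $(\pi^\star,\lambda^\star)$ is a saddle point of~\eqref{eq:lagrangian}, the standard chain applies. Minimality in $\lambda$ forces $J_c(\pi^\star)\le d$ and complementary slackness $\lambda^\star(J_c(\pi^\star)-d)=0$. Then for any feasible $\pi$,
\[
J_r(\pi)\le\mathcal{L}(\pi,\lambda^\star)\le\mathcal{L}(\pi^\star,\lambda^\star)=J_r(\pi^\star),
\]
so $\pi^\star$ attains the constrained optimum.

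\emph{Main obstacle.} The hardest step is the occupancy-measure reduction when $\mathcal{S}$ and $\mathcal{A}$ are continuous, as they are here. Two things must be checked: $\mathcal{K}$ must be convex and its measures realizable by policies, and the supremum must behave well (compactness or attainment). I would first appeal to Altman's general-state results~\cite{altman1999constrained} and Paternain et al.~\cite{paternain2019zerogap}, which establish zero duality gap for parametrization-free policies under boundedness and Slater's condition. I would also state explicitly that the conclusion concerns the policy class, not the neural parametrization. That caveat is deferred to Remark~\ref{rem:scope}.
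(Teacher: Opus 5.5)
Your proposal is correct and takes essentially the paper's route: convexity of $g$ as a pointwise supremum of affine functions, then the subgradient $-(J_c(\pi_\lambda)-d)$, which the paper gets from Danskin's theorem and you get from the direct subgradient inequality; you then unpack the zero-duality-gap step via the occupancy-measure linear program and Slater's condition, which the paper only cites as the standard CMDP result, and you add the saddle-point chain the paper leaves implicit. One small correction: since $\lambda_{\min}=0.05>0$, your side condition should be $\lambda^\star\in[\lambda_{\min},\lambda_{\max}]$ rather than only ``$\lambda_{\max}$ large enough,'' because when the budget is slack, complementary slackness gives $\lambda^\star=0$, which lies outside the box.
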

\begin{proof}
For fixed $\pi$, $\mathcal{L}(\pi,\lambda)$ is affine in $\lambda$, so $g$ is a pointwise maximum of affine functions and hence convex; by Danskin's theorem~\cite{boyd2004convex} a subgradient of $g$ at $\lambda$ is $-(J_c(\pi^\star_\lambda)-d)$, which is the increment applied in~\eqref{eq:dual}. Zero duality gap, under Slater's condition, is the standard CMDP result~\cite{altman1999constrained,paternain2019zerogap}.
\end{proof}

\begin{proposition}[Asymptotic constraint satisfaction]
\label{prop:violation}
Under Assumption~\ref{ass:reg}, the idealized primal--dual iterates $\{(\pi_k,\lambda_k)\}$ satisfy, for some constant $C>0$,
\begin{equation}
\frac{1}{K}\sum_{k=0}^{K-1}\bigl(J_c(\pi_k)-d\bigr)_+ \;\le\; \frac{C}{\sqrt{K}},\qquad K\ge 1,
\label{eq:violation}
\end{equation}
where $(x)_+=\max(x,0)$ and $K$ indexes the dual iterations. The running-average budget overshoot therefore vanishes as $K\to\infty$, and every limit point of $\{\pi_k\}$ is feasible.
\end{proposition}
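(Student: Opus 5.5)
The plan is to read the idealized iterates in the standard primal--dual way. At each dual iteration $k$, $\pi_k$ is an exact maximizer of $\mathcal{L}(\cdot,\lambda_k)$, so $\pi_k=\pi^\star_{\lambda_k}$. The dual step~\eqref{eq:dual} then uses the exact violation $J_c(\pi_k)-d$. By Proposition~\ref{prop:duality}, this is a projected subgradient method on the convex dual $g$, with subgradient $s_k=-(J_c(\pi_k)-d)$. By Assumption~\ref{ass:reg}, $J_c$ is bounded, so $|s_k|\le G$ for some constant $G$.

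The quantitative bound will come from a potential argument on $\lambda_k$ itself, not on $g$. Write $\Lambda=[\lambda_{\min},\lambda_{\max}]$. Nonexpansiveness of the projection gives, for any fixed $\lambda\in\Lambda$,
\[
(\lambda_{k+1}-\lambda)^2\le(\lambda_k-\lambda)^2+2\eta_\lambda(\lambda_k-\lambda)(J_c(\pi_k)-d)+\eta_\lambda^2G^2 .
\]
Summing from $k=0$ to $K-1$ and telescoping yields
\[
\sum_k(\lambda-\lambda_k)(J_c(\pi_k)-d)\le \frac{(\lambda_0-\lambda)^2}{2\eta_\lambda}+\frac{\eta_\lambda G^2K}{2}.
\]

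Next I need to control $\sum_k\lambda_k(J_c(\pi_k)-d)$ from below. Let $\pi^\ast$ be a constrained optimum, which exists by Proposition~\ref{prop:duality}, and let $J_r^\ast=J_r(\pi^\ast)$. Since $\pi_k$ maximizes $\mathcal{L}(\cdot,\lambda_k)$ and $\pi^\ast$ is feasible,
\[
J_r(\pi_k)-\lambda_k(J_c(\pi_k)-d)\ge J_r^\ast-\lambda_k(J_c(\pi^\ast)-d)\ge J_r^\ast .
\]
Hence $\lambda_k(J_c(\pi_k)-d)\le J_r(\pi_k)-J_r^\ast$. Substituting this into the telescoped inequality gives
\[
\lambda\sum_k(J_c(\pi_k)-d)\le \frac{(\lambda_0-\lambda)^2}{2\eta_\lambda}+\frac{\eta_\lambda G^2K}{2}+\sum_k\bigl(J_r(\pi_k)-J_r^\ast\bigr).
\]

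The last sum is the step I expect to be the main obstacle. It must be controlled by a Slater argument. Let $\tilde\pi$ be strictly feasible, with $J_c(\tilde\pi)\le d-\sigma$ for some $\sigma>0$. Zero duality gap gives $J_r^\ast=g(\lambda^\ast)$ with the standard bound $\lambda^\ast\le (J_r^\ast-J_r(\tilde\pi))/\sigma$. I would assume $\lambda_{\max}$ exceeds this bound, which is the natural reading of the clipping range being nonbinding. The same optimality inequality then controls the reward overshoot: $J_r(\pi_k)-J_r^\ast\le\lambda^\ast\,(J_c(\pi_k)-d)$, applied at the dual optimum. Choosing $\lambda=\lambda_{\max}$ with $\lambda_{\max}-\lambda^\ast\ge\lambda^\ast\ge$ some positive margin then isolates the violation.

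The bound is stated for the positive part, so I would switch to the positive-part version of the same telescoping. This is done with the comparator $\lambda=\lambda_k+\text{const}$ only on rounds where $J_c(\pi_k)>d$. Equivalently, I would apply the standard bound $\sum_k(J_c(\pi_k)-d)_+\le(\lambda_{K}-\lambda_0)/\eta_\lambda+{}$(drift). This works because $\lambda$ increases by exactly $\eta_\lambda(J_c-d)_+$ unless it is clipped at $\lambda_{\max}$, and Slater keeps the iterates away from that clip.

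With $\eta_\lambda\propto1/\sqrt K$, the ``sufficiently small'' step of Assumption~\ref{ass:reg}, both terms are $O(\sqrt K)$. Dividing by $K$ gives~\eqref{eq:violation}, with $C$ depending on $G$, $\lambda_{\max}$, $\lambda_0$ and $\sigma$.

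For the final claim, suppose a limit point $\bar\pi$ of $\{\pi_k\}$ has $J_c(\bar\pi)>d$. By continuity of $J_c$, assumed differentiable, a fixed positive fraction of the iterates would carry violation bounded away from zero. I would make this rigorous by extracting a subsequence and using that the averaged positive part forces $(J_c(\pi_k)-d)_+\to0$ along a density-one subsequence. Matching this with the ``every limit point'' claim probably requires the mild extra step of showing that $J_c(\pi_k)-d$ has vanishing oscillation, and this is where I would most expect to need to qualify the statement.
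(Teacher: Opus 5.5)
Your first half is sound, and it proves the part of the statement that is actually true. The fixed comparator $\lambda=\lambda_{\max}$ in the telescoped projection inequality, combined with $J_r(\pi)-J_r^\ast\le\lambda^\ast\bigl(J_c(\pi)-d\bigr)$ from zero duality gap, gives
\[
(\lambda_{\max}-\lambda^\ast)\sum_{k=0}^{K-1}\bigl(J_c(\pi_k)-d\bigr)\le\frac{(\lambda_0-\lambda_{\max})^2}{2\eta_\lambda}+\frac{\eta_\lambda G^2K}{2}.
\]
This is an $O(1/\sqrt{K})$ bound on the \emph{signed} average violation, i.e.\ near-feasibility of the mixture occupancy measure $\frac{1}{K}\sum_k\rho^{\pi_k}$. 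It needs only $\lambda_{\max}>\lambda^\ast$, not $\lambda_{\max}-\lambda^\ast\ge\lambda^\ast$; the paper makes the same assumption implicitly when it projects ``toward the feasible $\lambda^\star$''.

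The gap is the passage to positive parts. Before clipping, $\lambda_{k+1}-\lambda_k=\eta_\lambda\bigl(J_c(\pi_k)-d\bigr)$ is the signed violation, not $\eta_\lambda\bigl(J_c(\pi_k)-d\bigr)_+$, because $\lambda$ also falls on under-budget rounds. So $(\lambda_K-\lambda_0)/\eta_\lambda$ recovers only $\sum_k\bigl(J_c(\pi_k)-d\bigr)$. Your unspecified ``drift'' is exactly the total slack $\sum_k\bigl(d-J_c(\pi_k)\bigr)_+$, which nothing in your argument controls. A round-dependent comparator $\lambda_k+\text{const}$ is not available either, since it breaks the telescoping.

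More care will not fix this: under Assumption~\ref{ass:reg} alone, the positive-part bound is false for the exact-maximizer iterates you use. Take one state and two actions, $A$ over budget by $a>0$ and $B$ under budget by $b>0$, with rewards scaled so that $\lambda^\ast\in(\lambda_{\min},\lambda_{\max})$.
\begin{itemize}
\item $\mathcal{L}(\cdot,\lambda)$ is maximized by $A$ for $\lambda<\lambda^\ast$ and by $B$ for $\lambda>\lambda^\ast$.
\item For small $\eta_\lambda$ and generic $\lambda_0$, the dual iterates oscillate across $\lambda^\ast$ and spend a fraction $b/(a+b)$ of rounds at $A$.
\item The running average of $\bigl(J_c(\pi_k)-d\bigr)_+$ therefore tends to $ab/(a+b)>0$, whatever the step size.
\item The infeasible $A$ remains a limit point of $\{\pi_k\}$, so your doubt about the limit-point claim is justified.
\end{itemize}
The positive-part rate needs an extra hypothesis, for instance that $\lambda\mapsto J_c(\pi^\star_\lambda)$ is $L$-Lipschitz (as under a strongly concave, entropy-regularized Lagrangian). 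Then, with $\eta_\lambda\le 1/L$, the descent lemma gives $\sum_k\bigl(J_c(\pi_k)-d\bigr)_+^2\le 2\bigl(g(\lambda_0)-\min_{[\lambda_{\min},\lambda_{\max}]}g\bigr)/\eta_\lambda$, and Cauchy--Schwarz yields the $1/\sqrt{K}$ rate.

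The paper's proof does not supply the missing step either. It runs your telescoping with comparator $\lambda^\star$, then asserts, from convexity and Slater, a $\beta>0$ with $(\lambda_k-\lambda^\star)\bigl(J_c(\pi_k)-d\bigr)\ge\beta\bigl(J_c(\pi_k)-d\bigr)_+$. As written, this points the wrong way for substitution into an upper bound on $\Phi_{k+1}-\Phi_k$; one would need $\le-\beta(\cdot)_+$. Even with the sign corrected, it requires $\lambda^\star-\lambda_k\ge\beta$ on every over-budget round, which the oscillation above violates. The paper's limit-point conclusion is likewise given without argument.
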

\begin{proof}
Treat the primal--dual iterations as stochastic gradient descent--ascent on $\mathcal{L}$, and take the Lyapunov function $\Phi_k=\tfrac{1}{2\eta_\lambda}(\lambda_k-\lambda^\star)^2$. Because projection onto $[\lambda_{\min},\lambda_{\max}]$ is non-expansive toward the feasible $\lambda^\star$, the dual step~\eqref{eq:dual} gives
\[
\Phi_{k+1}-\Phi_k \;\le\; (\lambda_k-\lambda^\star)\bigl(J_c(\pi_k)-d\bigr) + \tfrac{\eta_\lambda}{2}\bigl(J_c(\pi_k)-d\bigr)^2 .
\]
By Lemma~\ref{lem:bounded}, the iterates are bounded, so $(J_c(\pi_k)-d)^2\le G^2$ for some $G$; convexity of the dual (Proposition~\ref{prop:duality}) together with Slater's condition (Assumption~\ref{ass:reg}) yields a constant $\beta>0$ with $(\lambda_k-\lambda^\star)(J_c(\pi_k)-d)\ge \beta\,(J_c(\pi_k)-d)_+$. Substituting, summing from $k=0$ to $K-1$, telescoping with $\Phi_k\ge0$, and dividing by $K$,
\[
\frac{1}{K}\sum_{k=0}^{K-1}\bigl(J_c(\pi_k)-d\bigr)_+ \;\le\; \frac{\Phi_0}{\beta\,K} + \frac{\eta_\lambda G^2}{2\beta}.
\]
Choosing $\eta_\lambda=\Theta(1/\sqrt{K})$ makes the right-hand side $O(1/\sqrt{K})$, which is~\eqref{eq:violation}; hence the running-average overshoot vanishes and every limit point of $\{\pi_k\}$ is feasible. The constrained-RL instantiation is detailed in~\cite{ray2019benchmarking,stooke2020responsive}.
\end{proof}

\begin{remark}[KKT conditions and the multiplier as a shadow price]
\label{rem:shadow}
At a saddle point $(\pi^\star,\lambda^\star)$ of~\eqref{eq:lagrangian}, the Karush--Kuhn--Tucker (KKT) conditions hold:
\begin{equation}
\begin{aligned}
&\nabla_\pi\mathcal{L}(\pi^\star,\lambda^\star)=0,\qquad J_c(\pi^\star)\le d,\\
&\lambda^\star\ge0,\qquad \lambda^\star\bigl(J_c(\pi^\star)-d\bigr)=0.
\end{aligned}
\label{eq:kkt}
\end{equation}
Complementary slackness (the last equality in~\eqref{eq:kkt}) gives a dichotomy: either the budget is active, $J_c(\pi^\star)=d$ with $\lambda^\star\ge0$, or it is slack, $J_c(\pi^\star)<d$ with $\lambda^\star=0$. By the stationarity condition in~\eqref{eq:kkt} and the envelope theorem~\cite{boyd2004convex}, $\lambda^\star=-\,\partial J_r^\star/\partial d$, so $\lambda^\star$ is the shadow price of the average-power budget, the marginal task-return cost of one watt, discovered by~\eqref{eq:dual} rather than fixed in advance. The multiplier in~\eqref{eq:surrogate} thus plays the role a hand-set penalty weight would, but is selected automatically for the declared budget $d$.
\end{remark}

\begin{remark}[Scope of the analysis]
\label{rem:scope}
The properties above concern the idealized exact-gradient dynamics. With clipped PPO updates, a non-convex policy class, and finite-sample estimates of $J_c$, the inner maximization is solved only approximately, so constraint satisfaction holds in the running-average, in-expectation sense of~\eqref{eq:violation} rather than per episode, and convergence is to a first-order stationary point rather than a global optimum. A further approximation is that the cost critic $V^c_\psi$ bootstraps the budget-centered power with discount $\gamma$ and thus serves only to estimate the cost advantage~\eqref{eq:gae}, whereas the constraint is enforced on the undiscounted episode-average power $\hat{J}_c$ through the dual update~\eqref{eq:dual}; the two averages agree as $\gamma\to1$ and are close at $\gamma=0.99$. The empirical $\lambda$ trajectories in Section~\ref{sec:exp}, which rise while the policy is over budget and then settle, are consistent with convergence to such a working multiplier.
\end{remark}

\begin{remark}[Budget feasibility and multiplier saturation]
\label{rem:saturation}
The bound $[\lambda_{\min},\lambda_{\max}]$ of Lemma~\ref{lem:bounded} trades guaranteed feasibility for stability. If the declared budget $d$ is feasible under the task reward, the dichotomy of~\eqref{eq:kkt} holds and the achieved average power settles at or below $d$. If $d$ is set below what the task admits, the dual ascent~\eqref{eq:dual} pushes $\lambda$ to its ceiling $\lambda_{\max}$, where it saturates: the constraint can no longer be tightened, and the policy settles at the lowest average power compatible with still performing the task, which may lie above $d$. The budget then acts less as a hard guarantee than as a monotone knob, since lowering $d$ monotonically lowers the achieved average power until saturation, and the gap between $d$ and the realized average power exposes, rather than hides, how aggressively energy is being traded against task accuracy. Section~\ref{sec:exp} reports this regime explicitly.
\end{remark}

\section{Vehicles, Tasks, and Evaluation}
\label{sec:setup}

\subsection{Simulator and Vehicles}
All experiments are conducted in MarineGym~\cite{chu2025marinegym}, a GPU-accelerated underwater-robotics RL platform that simulates rigid-body and hydrodynamic effects and thruster power for thousands of parallel environments. We study three vehicles with deliberately different actuation and inertia (Fig.~\ref{fig:robots}), so as to test whether the energy formulation generalizes across embodiments:
\begin{itemize}
\item \textbf{BlueROV}~\cite{bluerobotics_bluerov2}: a six-thruster multirotor-type ROV (Blue Robotics) that achieves full six-degree-of-freedom (6-DoF) control through differential thrust;
\item \textbf{BlueROV-Heavy}~\cite{bluerobotics_bluerov2_heavy}: an eight-thruster configuration of the same Blue Robotics platform, more strongly and redundantly actuated; and
\item \textbf{CUREE-AUV}~\cite{girdhar2023curee}: a compact, hovering-capable autonomous underwater vehicle developed for marine ecosystem exploration, driven by six thrusters and, as the heaviest platform here, the most demanding to actuate precisely.
\end{itemize}

\subsection{Tasks}
We evaluate four control tasks, illustrated in Fig.~\ref{fig:tasks}:
\begin{itemize}
\item \textbf{Hover}: reach and hold a 6-DoF setpoint from randomized initial conditions (Fig.~\ref{fig:tasks}a);
\item \textbf{Track-Lemniscate}: follow a 3-D figure-eight reference (Fig.~\ref{fig:tasks}b);
\item \textbf{Track-Circle}: follow a horizontal circular reference (Fig.~\ref{fig:tasks}c); and
\item \textbf{Track-Spiral}: follow a descending/ascending helical reference (Fig.~\ref{fig:tasks}d).
\end{itemize}
Together these cover station-keeping and tracking of references with increasing curvature and vertical motion, exercising different regions of each vehicle's actuation envelope.

\begin{figure}
    \centering
    \begin{subfigure}{0.32\linewidth}
        \centering
        \includegraphics[width=\linewidth]{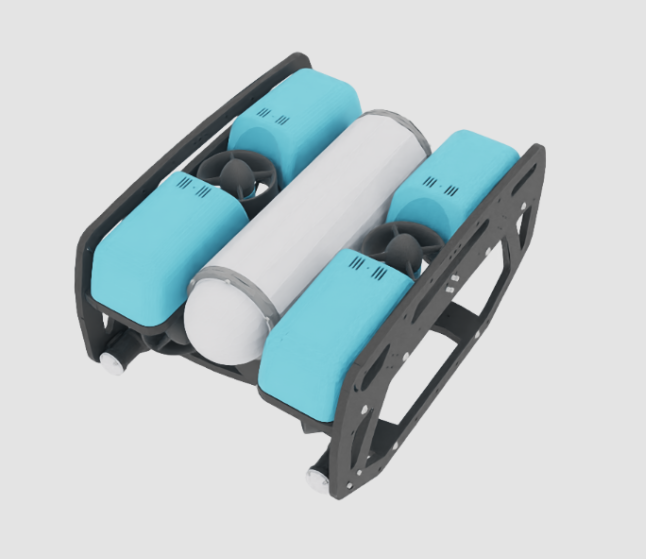}
        \caption{BlueROV}
        \label{fig:br}
    \end{subfigure}
    \hfill
    \begin{subfigure}{0.32\linewidth}
        \centering
        \includegraphics[width=\linewidth, trim={0pt 15pt 0pt 0pt}, clip]{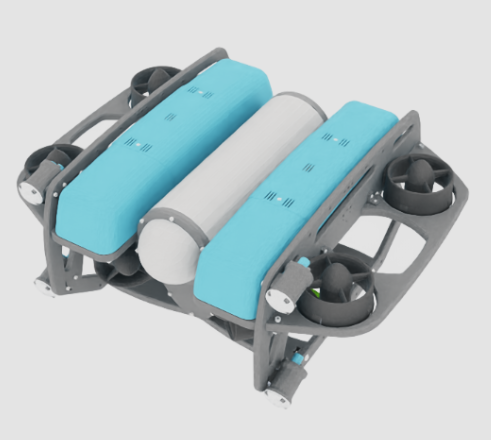}
        \caption{BlueROV-Heavy}
        \label{fig:brh}
    \end{subfigure}
    \hfill
    \begin{subfigure}{0.32\linewidth}
        \centering
        \includegraphics[width=\linewidth, height=0.1\textheight]{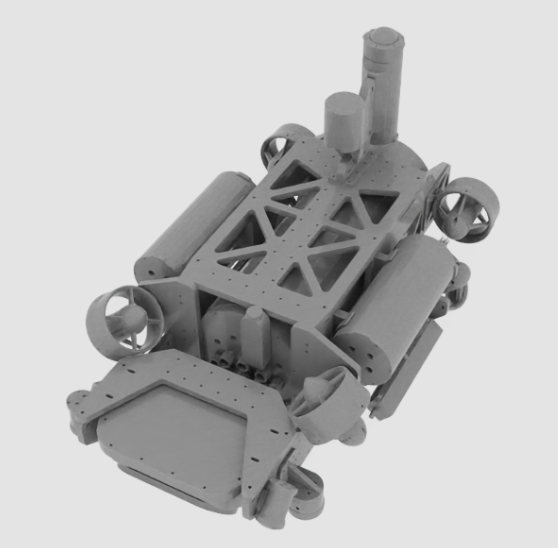}
        \caption{CUREE-AUV}
        \label{fig:CUREE-AUV}
    \end{subfigure}
    \caption{The three vehicles studied in MarineGym, spanning different actuation and inertia: (a) the six-thruster BlueROV, (b) the more strongly actuated eight-thruster BlueROV-Heavy, and (c) the heavier six-thruster CUREE-AUV.}
    \label{fig:robots}
\end{figure}

\begin{table}
\centering
\caption{Comparison of underwater robot platforms.}
\label{tab:robot_comparison}
\footnotesize
\begin{tabular}{lcccc}
\toprule
Robot & Weight  & Actuators & Volume  & CoB offset\\
& (kg) & &(m$^3$) & (m)\\
\midrule
BlueROV        & $11.2$ & T200 $\times$ 6 & $0.01135$  & $[0.0,0.0,0.01]$\\
BlueROV-Heavy   & $11.5$ & T200 $\times$ 8 & $0.01165$ & $[0.0,0.0,0.01]$\\
CUREE-AUV       & $22.7$ & T200 $\times$ 6 & $0.02275$ & $[-0.05,0.0,0.01]$\\
\bottomrule
\end{tabular}
\end{table}

\begin{figure}[tb]
    \centering
    \begin{subfigure}[t]{0.24\linewidth}
        \centering
        \includegraphics[width=\linewidth]{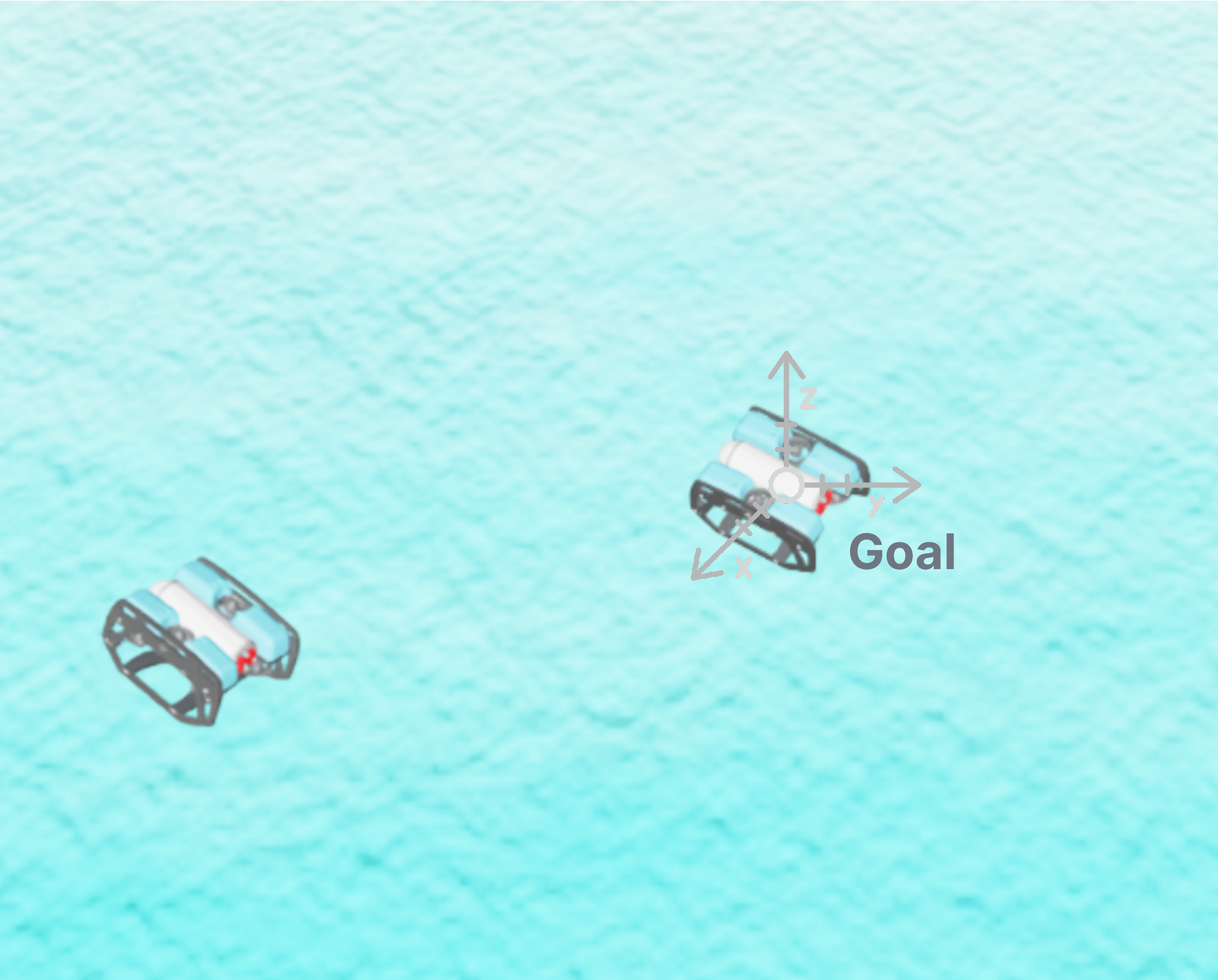}
        \caption{Hover}
        \label{fig:task-h}
    \end{subfigure}
    \hfill
    \begin{subfigure}[t]{0.24\linewidth}
        \centering
        \includegraphics[width=\linewidth, trim={0pt 60pt 0pt 60pt}, clip]{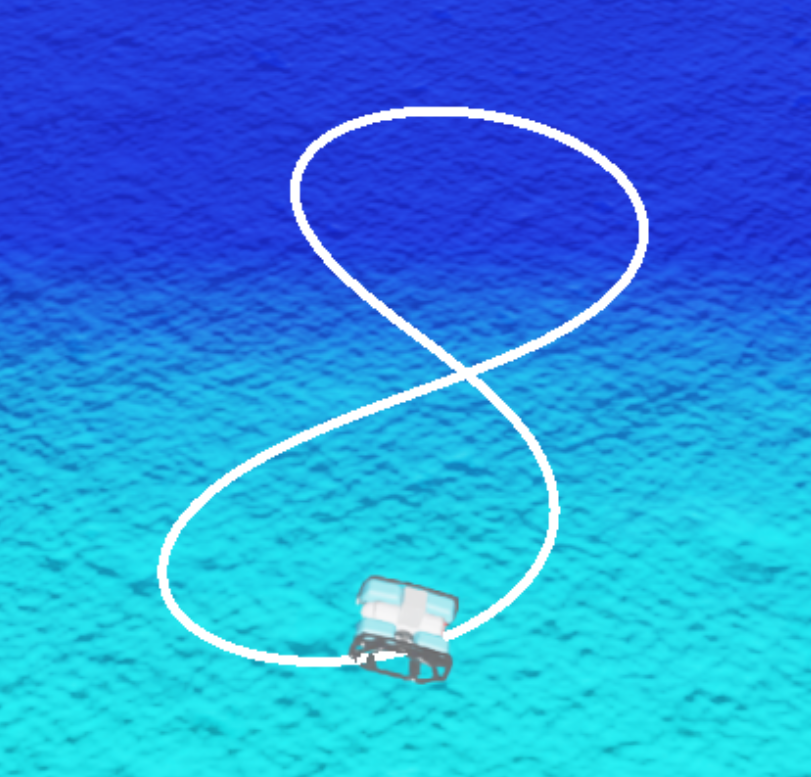}
        \caption{Lemniscate}
        \label{fig:task-l}
    \end{subfigure}
    \hfill
    \begin{subfigure}[t]{0.24\linewidth}
        \centering
        \includegraphics[width=\linewidth]{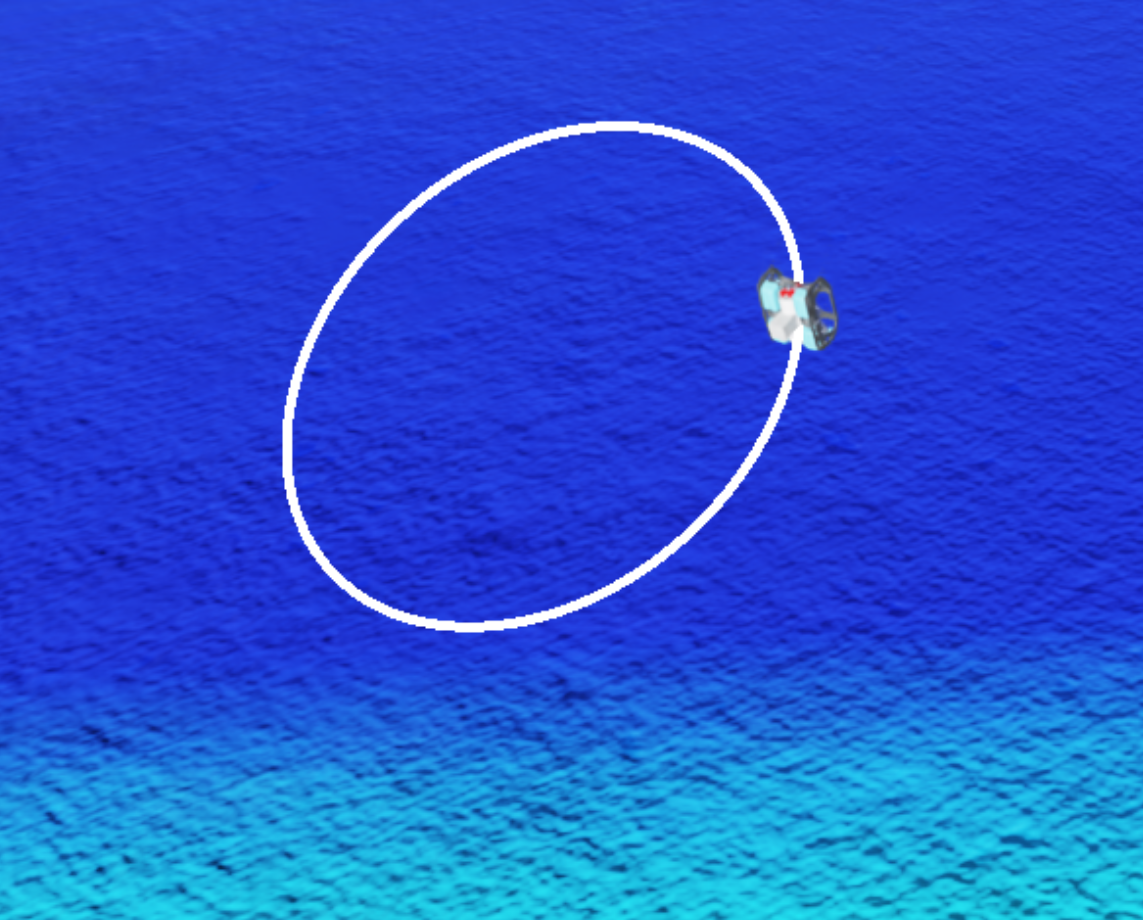}
        \caption{Circle}
        \label{fig:task-c}
    \end{subfigure}
    \hfill
    \begin{subfigure}[t]{0.24\linewidth}
        \centering
        \includegraphics[width=\linewidth]{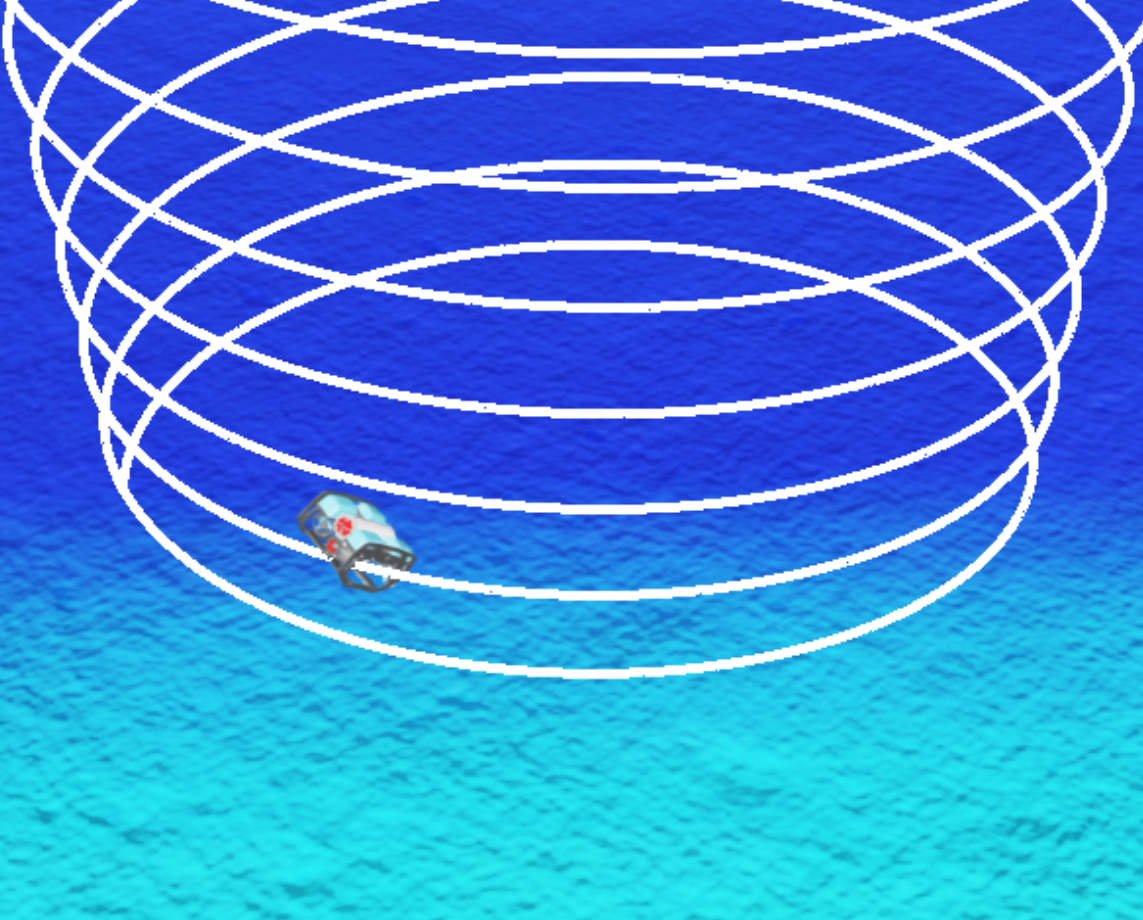}
        \caption{Spiral}
        \label{fig:task-s}
    \end{subfigure}
    \caption{The four control tasks shown with BlueROV in MarineGym: (a) station-keeping at a setpoint (``Goal'') and tracking a (b) lemniscate, (c) circular, and (d) spiral reference.}
    \label{fig:tasks}
\end{figure}

\subsection{Evaluation Metrics}
We report five deployment metrics. Avg. Power (W) is $\bar P$, the mean thruster power of~\eqref{eq:power} (lower is better), our primary efficiency metric. Smoothness is the action-smoothness score
\begin{equation}
\mathcal{S} = \frac{1}{T-1}\sum_{t=1}^{T-1}\bigl\|\boldsymbol{a}_t-\boldsymbol{a}_{t-1}\bigr\|_2 ,
\label{eq:smoothness}
\end{equation}
the mean $\ell_2$ norm of the change between consecutive throttle commands (closer to zero is smoother), a reported diagnostic rather than a constrained quantity. Track Err (m) is the root-mean-square error (RMSE) of the position relative to the reference (tracking tasks),
\begin{equation}
\mathrm{TrackErr} = \sqrt{\frac{1}{T}\sum_{t=0}^{T-1}\bigl\|\boldsymbol{p}_t-\boldsymbol{p}^{\mathrm{ref}}_t\bigr\|_2^2},
\label{eq:trackerr}
\end{equation}
with $\boldsymbol{p}_t$ the vehicle position and $\boldsymbol{p}^{\mathrm{ref}}_t$ the reference. TTG (time-to-goal) is the number of control steps to first reach and hold the setpoint tolerance (hover only). Success (\%) is the fraction of episodes meeting the task tolerance for the required duration. Episodic task return is monitored during training (Fig.~\ref{fig:curves}). Each policy is evaluated over 100 seeds per vehicle--task setting, and all numbers are reported as mean~$\pm$ standard deviation over these seeds.

\subsection{Baselines and Training}
We compare three policies that share the same actor and critic networks, observation space, task reward, optimizer, and number of training frames, and differ only in how energy enters the objective. (i) PPO is task-only, maximizing $J_r$ alone. (ii) PPO-Energy adds a fixed-weight action-effort penalty to the reward,
\begin{equation}
r_t^{\text{eff}} = r_t + w_E\, e^{-\lVert \boldsymbol{a}_t\rVert},\qquad w_E\ge 0,
\label{eq:energyreward}
\end{equation}
a unitless proxy that acts on action magnitude rather than power and whose weight $w_E$ is fixed by hand per setting. (iii) The proposed PPO-Lag of Section~\ref{sec:method} constrains the modeled average power to the budget $d$ through the dual update~\eqref{eq:dual}. Holding everything but the energy term identical isolates the effect of the energy formulation. The training was conducted on a mobile workstation equipped with an NVIDIA GeForce RTX~A4500 GPU (16~GB). Training uses $2{,}048$ parallel environments in MarineGym for $100$\,M frames per run, and every vehicle--task--method configuration is trained from three random seeds, under the domain randomization of Table~\ref{tab:disturbance_randomization}; the complete training and deployment procedure is given in Algorithm~\ref{alg:ppolag}, with hyperparameters in Table~\ref{tab:two_col_multirow}.

\begin{algorithm}[t]
\caption{Average-Power-Constrained PPO-Lagrangian: training and deployment}
\label{alg:ppolag}
\begin{algorithmic}[1]
\State \textbf{Hyperparameters:} budget $d$, dual step $\eta_\lambda$, clip $\epsilon$, GAE $\kappa$, entropy $c_e$, value weight $c_v$, epochs $E$, minibatches $B$, parallel envs $N$
\Statex \textit{// Training}
\State Initialize actor $\pi_\theta$, critics $V^r_\phi,V^c_\psi$, dual log-multiplier $\nu\gets\log\lambda_{\min}$
\For{iteration $=1,2,\dots$}
  \For{each of $N$ domain-randomized parallel envs (Table~\ref{tab:disturbance_randomization})}
    \State Roll out $\pi_\theta$: collect $\{(\boldsymbol{s}_t,\boldsymbol{a}_t,r_t,\boldsymbol{s}_{t+1})\}$, $\boldsymbol{a}_t\!\sim\!\pi_\theta(\cdot\!\mid\!\boldsymbol{s}_t)$
    \State $c_t\gets\sum_i P(f_{i,t})$ from the commanded thrusts \Comment{Eq.~\eqref{eq:t200}}
  \EndFor
  \State $\lambda\gets e^{\nu}$
  \State Reward GAE: $A^r_t,\hat{G}^r_t$ from $V^r_\phi$; \ \ cost GAE: $A^c_t,\hat{G}^c_t$ from $V^c_\psi,\,c_t\!-\!d$ \Comment{Eq.~\eqref{eq:gae}}
  \State $A_t(\lambda)\gets A^r_t-\lambda A^c_t$ \Comment{Eq.~\eqref{eq:surrogate}}
  \For{epoch $=1,\dots,E$, over minibatches of size $B$}
    \State Update $\theta,\phi,\psi$ by Adam on $\mathcal{J}(\theta,\phi,\psi)$ \Comment{Eqs.~\eqref{eq:clip},\eqref{eq:totalloss},\eqref{eq:valueloss}}
  \EndFor
  \State $\hat{J}_c\gets$ batch-mean average power $\tfrac{1}{T}\sum_t c_t$
  \State $\nu\gets\nu+\eta_\lambda(\hat{J}_c-d)$;\ \ $\lambda\gets\exp(\mathrm{clip}(\nu,\log\lambda_{\min},\log\lambda_{\max}))$ \Comment{Eqs.~\eqref{eq:dual},\eqref{eq:logspace}}
\EndFor
\Statex \textit{// Deployment (on-board inference)}
\State Load $\pi_\theta$; discard the critics and $\lambda$
\For{each control step $t$ on the vehicle}
  \State Observe $\boldsymbol{s}_t$; set $\boldsymbol{a}_t\gets$ mean of $\pi_\theta(\cdot\mid\boldsymbol{s}_t)$ (deterministic)
  \State Map $\boldsymbol{a}_t$ to thruster commands $\boldsymbol{u}$ and apply
\EndFor
\end{algorithmic}
\end{algorithm}

\begin{table}[t]
\centering
\caption{Disturbance and domain randomization settings used during training.}
\label{tab:disturbance_randomization}
\small
\begin{tabularx}{\linewidth}{lX}
\toprule
Parameter & Value \\
\midrule
\multicolumn{2}{c}{Disturbance Randomization} \\
\midrule
Max flow velocity & $[0.5, 0.5, 0.5, 0.0, 0.0, 0.0] m/s$\\
Flow velocity Gaussian noise & $[0.1, 0.1, 0.1, 0.0, 0.0, 0.0] m/s$\\
Payload mass & $[0.01, 0.2] \times Body Mass$\\
Payload $z$ position & $[-0.1, 0.1] m$\\
\midrule
\multicolumn{2}{c}{Robot Randomization} \\
\midrule
Body mass scale & $[0.8, 1.2]$ \\
Body volume scale & $[0.9, 1.1]$ \\
CoB offset scale & $[0.5, 1.5]$ \\
Inertia scale & $[0.8, 1.2]$ \\
Added mass scale & $[0.5, 1.0]$ \\
Linear damping scale & $[0.5, 1.0]$ \\
Quadratic damping scale & $[0.5, 1.0]$ \\
\midrule
\multicolumn{2}{c}{Actuator Randomization} \\
\midrule
Rotor time constants scale & $[0.8, 1.2]$ \\
Rotor force constants scale & $[0.8, 1.2]$ \\
\bottomrule
\end{tabularx}
\end{table}

\begin{table}[t]
\centering
\caption{PPO and Lagrangian-constraint hyperparameters.}
\label{tab:two_col_multirow}
\begin{tabular}{lm{0.5\linewidth}}
\toprule
\textbf{Component} & \textbf{Value} \\
\midrule
\multicolumn{2}{c}{PPO Hyperparameters} \\
\midrule
Actor & 3-layer MLP (256, 256, 256) \\
Critic & 3-layer MLP (256, 256, 256) \\
\multirow{3}{*}{Learning rate} & $1 \times 10^{-3}$ (BlueROV) \\
                               & $5 \times 10^{-4}$ (BlueROV-Heavy)\\ 
                               & $2 \times 10^{-3}$ (CUREE-AUV) \\
Training environments & $2,048$ \\
Training steps & $100,000,000$  \\
Random Training seeds & $3$ \\
\multirow{4}{*}{Episode length} & $200$ (Hover)\\
                                & $600$ (Track-Lemnicate)\\
                                & $600$ (Track-Circle)\\
                                & $600$ (Track-Spiral)\\
Batch size & $131,072$ ($2048 \times 64$) \\
Minibatch number & $16$ \\
PPO epochs & $4$ \\
Discount $\gamma$ & $0.99$\\
GAE parameter $\kappa$ & $0.95$ \\
Clip ratio $\epsilon$ & $0.1$ \\
Entropy coefficient $c_e$ & $0.001$ \\
Optimizer & Adam \\
Activation & ReLU \\
Simulation timestep & $0.016s$ \\
\midrule
\multicolumn{2}{c}{Lagrangian Constraint Hyperparameters} \\
\midrule
Initial Lagrange multiplier $\lambda_0$ & $e^{-2}$ \\
Multiplier range $[\lambda_{\min},\lambda_{\max}]$ & $[0.05, 2]$ \\
Dual step size $\eta_\lambda$ &$5\times 10^{-3}$\\
\multirow{3}{*}{Average-power budget $d$} & 1300\,W (BlueROV)\\
                                   & 1200\,W (BlueROV-Heavy) \\
                                   & 1300\,W (CUREE-AUV) \\
Cost-critic discount $\gamma_{c}$ & 0.99 \\
Cost GAE parameter $\kappa_{c}$ & 0.95 \\
\bottomrule
\end{tabular}
\end{table}


\begin{figure}[tb]
\centering
\begin{subfigure}{0.49\linewidth}
\centering
\includegraphics[width=\linewidth]{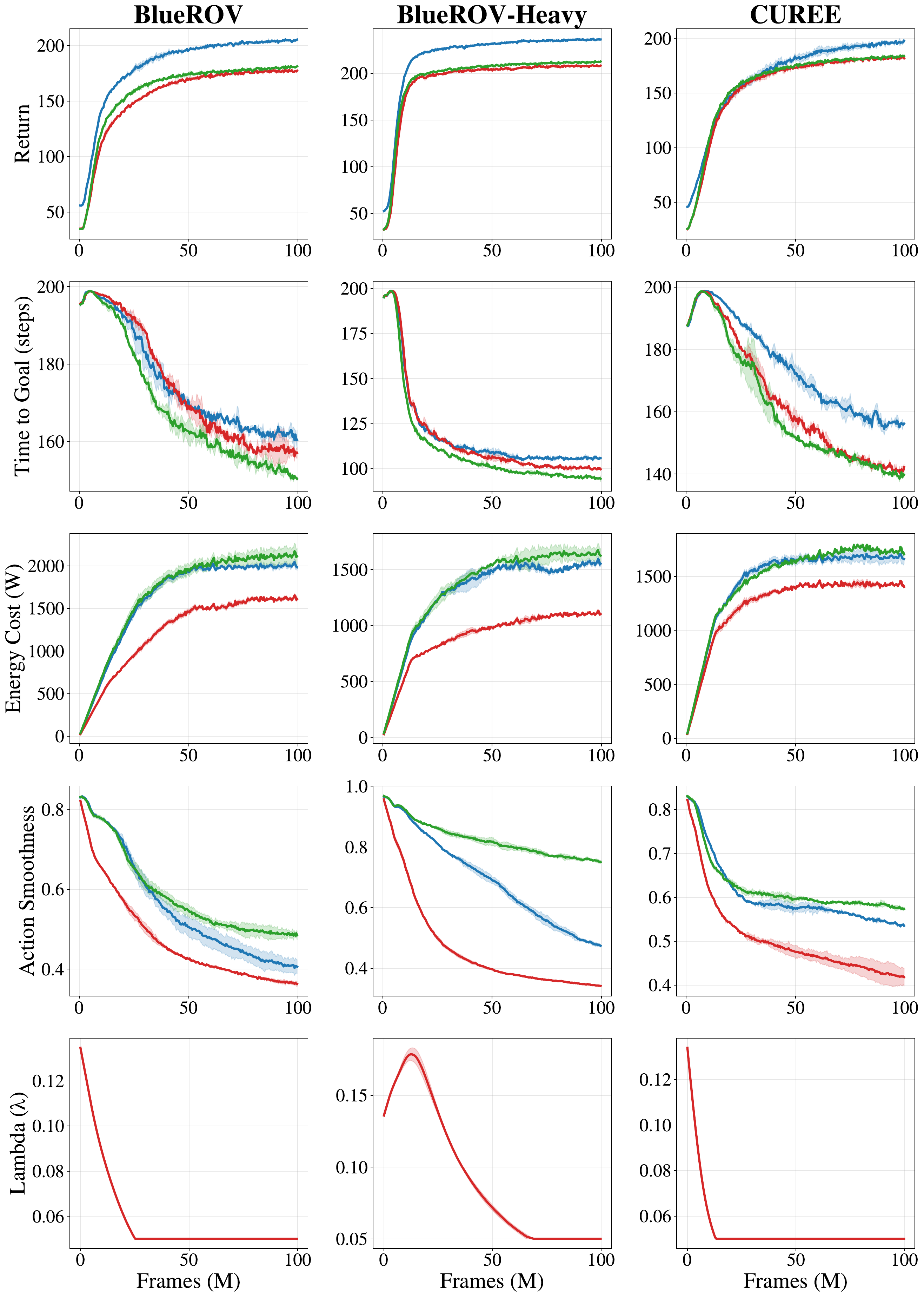}
\caption{Hover}
\end{subfigure}\hfill
\begin{subfigure}{0.49\linewidth}
\centering
\includegraphics[width=\linewidth]{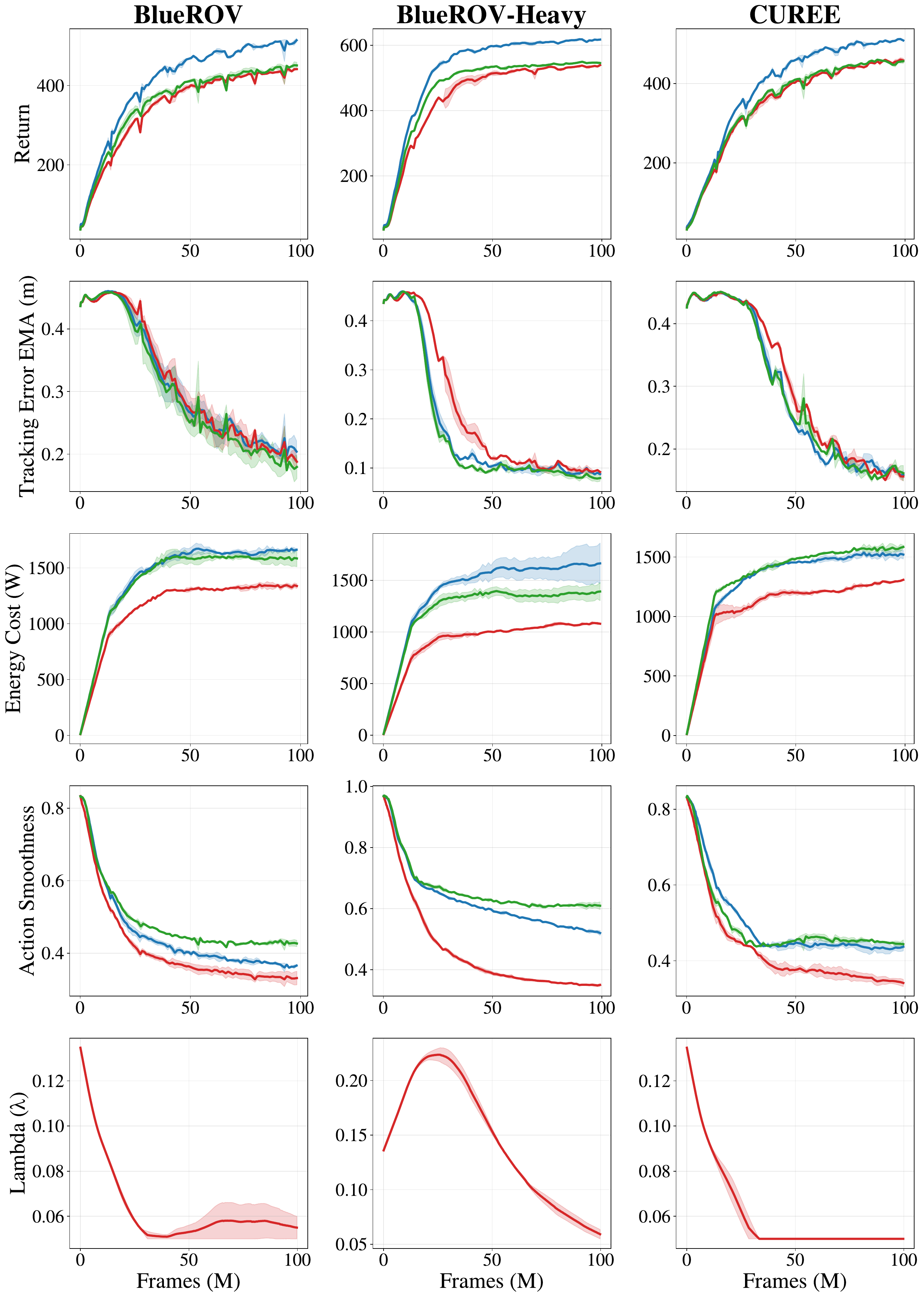}
\caption{Track-Lemniscate}
\end{subfigure}\\[2mm]
\begin{subfigure}{0.49\linewidth}
\centering
\includegraphics[width=\linewidth]{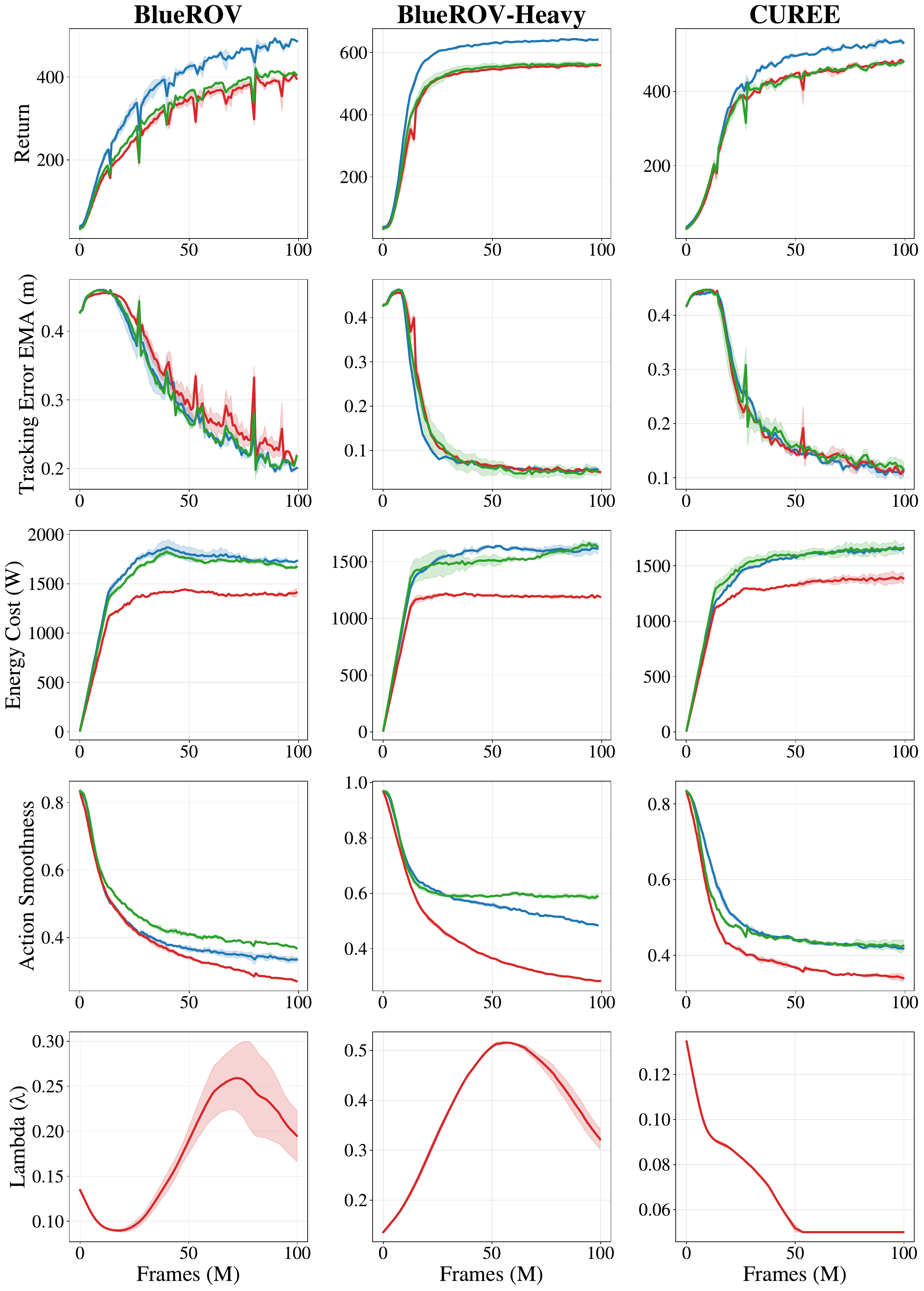}
\caption{Track-Circle}
\end{subfigure}\hfill
\begin{subfigure}{0.49\linewidth}
\centering
\includegraphics[width=\linewidth]{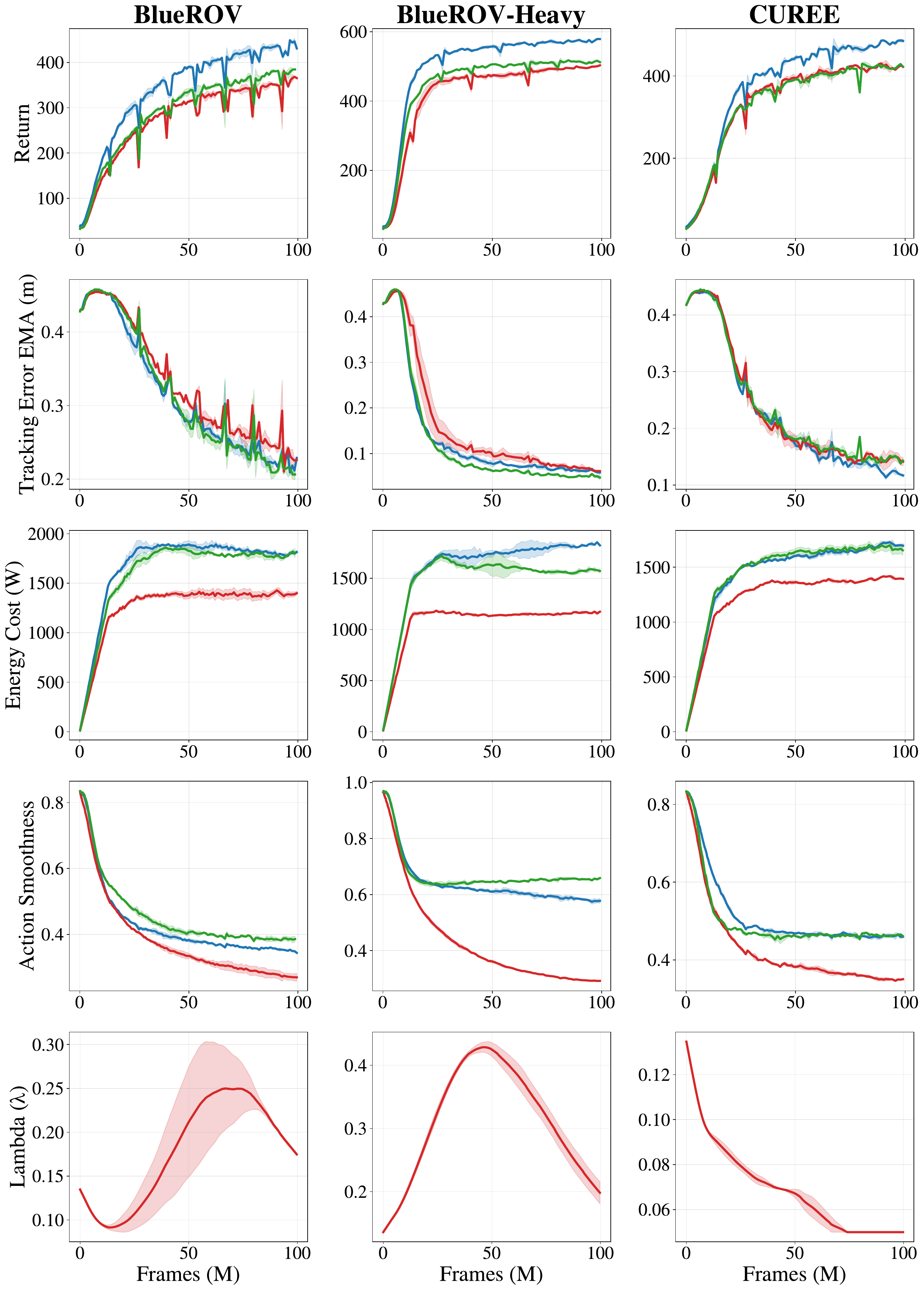}
\caption{Track-Spiral}
\end{subfigure}\\
\vspace{1mm}
\includegraphics[width=0.6\linewidth]{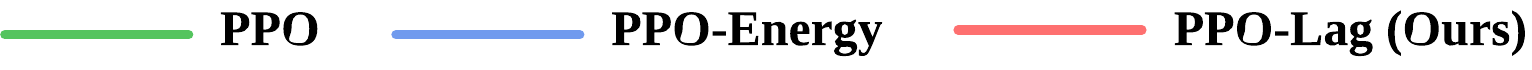}
\caption{Training curves for (a) Hover, (b) Track-Lemniscate, (c) Track-Circle, and (d) Track-Spiral. Columns: BlueROV, BlueROV-Heavy, CUREE-AUV; rows: return, time-to-goal (Hover) or tracking-error RMSE (tracking), average power, smoothness, and the multiplier $\lambda$. Each curve is the mean over three training seeds, and shaded bands show their spread.}
\label{fig:curves}
\end{figure}

\section{Experiments and Analysis}
\label{sec:exp}

We organize the analysis around five research questions: (RQ1) How do the three energy formulations behave during training? (RQ2) How much average power does each draw at deployment? (RQ3) How smooth is the resulting actuation? (RQ4) Is task quality (success and tracking) preserved? (RQ5) What is the controller's on-board deployment cost on resource-constrained hardware? Table~\ref{tab:performance} reports the full quantitative comparison; the figures provide the supporting training and rollout evidence.

\begin{table*}[tb]
\centering
\small
\setlength{\tabcolsep}{3pt}
\caption{Performance across tasks, vehicles, and methods. In each task--vehicle group, the best of the three methods per column is in \textbf{bold} (lowest for average power, smoothness, tracking error, and time-to-goal; highest for success). Values are mean~$\pm$ std over 100 evaluation seeds.}
\label{tab:performance}
\resizebox{\textwidth}{!}{%
\begin{tabular}{lllccccc}
\toprule
Task & Robot & Algorithm & Smoothness & Track Err & TTG & Avg. Power (W) & Success \\
\midrule
Hover & BlueROV & PPO & $0.280\pm0.129^{**}$ & -- & $\mathbf{153.3\pm34.1}$ & $2196.1\pm244.8^{**}$ & \textbf{80.0\%} \\
Hover & BlueROV & PPO-Energy & $0.185\pm0.103^{**}$ & -- & $157.7\pm36.8$ & $1935.0\pm173.6^{**}$ & 72.0\% \\
Hover & BlueROV & PPO-Lag (Ours) & $\mathbf{0.145\pm0.107}$ & -- & $153.4\pm35.9$ & $\mathbf{1636.1\pm342.9}$ & 79.0\% \\
\addlinespace

Hover & BlueROV-Heavy & PPO & $0.081\pm0.059^{**}$ & -- & $\mathbf{95.4\pm24.5}$ & $1611.6\pm329.0^{**}$ & \textbf{100.0\%} \\
Hover & BlueROV-Heavy & PPO-Energy & $0.129\pm0.105^{**}$ & -- & $107.2\pm28.8^{*}$ & $1544.2\pm271.7^{**}$ & 99.0\% \\
Hover & BlueROV-Heavy & PPO-Lag (Ours) & $\mathbf{0.021\pm0.005}$ & -- & $98.2\pm24.0$ & $\mathbf{1103.0\pm276.8}$ & 100.0\% \\
\addlinespace
Hover & CUREE-AUV & PPO & $0.251\pm0.127^{**}$ & -- & $138.5\pm35.8$ & $1670.0\pm300.0^{**}$ & $87.0\%$ \\
Hover & CUREE-AUV & PPO-Energy & $0.261\pm0.137^{**}$ & -- & $156.2\pm37.3^{**}$ & $1650.7\pm280.4^{**}$ & $73.0\%$ \\
Hover & CUREE-AUV & PPO-Lag (Ours) & $\mathbf{0.084\pm0.047}$ & -- & $\mathbf{133.5\pm37.9}$ & $\mathbf{1416.2\pm319.1}$ & \textbf{89.0\%} \\

\midrule
Track-Lemniscate & BlueROV & PPO & $0.145\pm0.089^{**}$ & $0.383\pm0.559$ & -- & $2054.1\pm170.8^{**}$ & 71.0\% \\
Track-Lemniscate & BlueROV & PPO-Energy & $0.113\pm0.063^{**}$ & $0.407\pm0.709$ & -- & $2027.1\pm132.4^{**}$ & 72.0\% \\
Track-Lemniscate & BlueROV & PPO-Lag (Ours) & $\mathbf{0.091\pm0.029}$ & $\mathbf{0.241\pm0.452}$ & -- & $\mathbf{1664.4\pm164.1}$ & \textbf{92.0\%} \\
\addlinespace
Track-Lemniscate & BlueROV-Heavy & PPO & $0.200\pm0.074^{**}$ & $\mathbf{0.060\pm0.056}^{**}$ & -- & $1893.2\pm154.8^{**}$ & \textbf{100.0\%} \\
Track-Lemniscate & BlueROV-Heavy & PPO-Energy & $\mathbf{0.131\pm0.052}^{**}$ & $0.089\pm0.050$ & -- & $2165.2\pm139.2^{**}$ & 100.0\% \\
Track-Lemniscate & BlueROV-Heavy & PPO-Lag (Ours) & $0.158\pm0.054$ & $0.090\pm0.070$ & -- & $\mathbf{1448.2\pm142.4}$ & 100.0\% \\
\addlinespace
Track-Lemniscate & CUREE-AUV & PPO & $0.120\pm0.056^{**}$ & $\mathbf{0.711\pm1.098}$ & -- & $1829.2\pm118.7^{**}$ & \textbf{59.0\%} \\
Track-Lemniscate & CUREE-AUV & PPO-Energy & $0.127\pm0.069^{**}$ & $1.186\pm1.518$ & -- & $1731.5\pm125.5^{**}$ & $43.0\%$ \\
Track-Lemniscate & CUREE-AUV & PPO-Lag (Ours) & $\mathbf{0.091\pm0.044}$ & $0.850\pm1.332$ & -- & $\mathbf{1577.6\pm150.2}$ & $58.0\%$ \\
\midrule
Track-Circle & BlueROV & PPO & $0.049\pm0.048$ & $3.027\pm2.345^{**}$ & -- & $2255.4\pm210.6^{**}$ & 35.0\% \\
Track-Circle & BlueROV & PPO-Energy & $0.068\pm0.055^{**}$ & $1.542\pm1.636$ & -- & $2120.9\pm227.2^{**}$ & \textbf{48.0\%} \\
Track-Circle & BlueROV & PPO-Lag (Ours) & $\mathbf{0.047\pm0.039}$ & $\mathbf{1.533\pm1.620}$ & -- & $\mathbf{1746.2\pm343.9}$ & 45.0\% \\
\addlinespace
Track-Circle & BlueROV-Heavy & PPO & $0.179\pm0.079^{**}$ & $\mathbf{0.349\pm2.039}$ & -- & $2117.5\pm175.9^{**}$ & \textbf{98.0\%} \\
Track-Circle & BlueROV-Heavy & PPO-Energy & $0.164\pm0.075^{**}$ & $0.371\pm1.617$ & -- & $2315.7\pm157.7^{**}$ & 97.0\% \\
Track-Circle & BlueROV-Heavy & PPO-Lag (Ours) & $\mathbf{0.093\pm0.046}$ & $0.469\pm1.728$ & -- & $\mathbf{1720.0\pm177.5}$ & 95.0\% \\
\addlinespace
Track-Circle & CUREE-AUV & PPO & $0.232\pm0.109^{**}$ & $\mathbf{1.044\pm2.350}$ & -- & $1788.3\pm127.1^{**}$ & \textbf{80.0\%} \\
Track-Circle & CUREE-AUV & PPO-Energy & $0.156\pm0.119$ & $2.181\pm2.781^{**}$ & -- & $1810.2\pm135.4^{**}$ & $58.0\%$ \\
Track-Circle & CUREE-AUV & PPO-Lag (Ours) & $\mathbf{0.137\pm0.071}$ & $1.216\pm2.032$ & -- & $\mathbf{1469.5\pm181.1}$ & $69.0\%$ \\
\midrule
Track-Spiral & BlueROV & PPO & $0.050\pm0.046^{**}$ & $1.778\pm2.284$ & -- & $2140.9\pm217.4^{**}$ & 52.0\% \\
Track-Spiral & BlueROV & PPO-Energy & $0.064\pm0.054^{**}$ & $\mathbf{1.546\pm1.953}$ & -- & $2212.1\pm177.0^{**}$ & 47.0\% \\
Track-Spiral & BlueROV & PPO-Lag (Ours) & $\mathbf{0.034\pm0.031}$ & $1.835\pm2.528$ & -- & $\mathbf{1558.0\pm395.7}$ & \textbf{59.0\%} \\
\addlinespace
Track-Spiral & BlueROV-Heavy & PPO & $0.220\pm0.102^{**}$ & $\mathbf{0.090\pm0.448}$ & -- & $1912.0\pm170.3^{**}$ & 99.0\% \\
Track-Spiral & BlueROV-Heavy & PPO-Energy & $0.178\pm0.108^{**}$ & $0.104\pm0.078$ & -- & $2243.2\pm125.8^{**}$ & \textbf{100.0\%} \\
Track-Spiral & BlueROV-Heavy & PPO-Lag (Ours) & $\mathbf{0.079\pm0.032}$ & $0.127\pm0.129$ & -- & $\mathbf{1644.6\pm128.3}$ & 100.0\% \\
\addlinespace
Track-Spiral & CUREE-AUV & PPO & $0.186\pm0.137^{**}$ & $\mathbf{1.876\pm2.282}$ & -- & $1873.4\pm145.2^{**}$ & \textbf{74.0\%} \\
Track-Spiral & CUREE-AUV & PPO-Energy & $0.211\pm0.150^{**}$ & $2.109\pm3.098$ & -- & $1857.4\pm117.9^{**}$ & $65.0\%$ \\
Track-Spiral & CUREE-AUV & PPO-Lag (Ours) & $\mathbf{0.126\pm0.088}$ & $2.098\pm2.943$ & -- & $\mathbf{1565.4\pm217.8}$ & $66.0\%$ \\
\bottomrule
\end{tabular}
}

\vspace{2pt}
\footnotesize
$^{*}$ $p<0.05$, $^{**}$ $p<0.01$ (Welch's $t$-test, compared PPO and PPO-Energy to PPO-Lag).

\end{table*}

\subsection{RQ1: Training Dynamics}
Fig.~\ref{fig:curves} plots, for each task and vehicle (one column per vehicle), the training curves of return, time-to-goal or tracking error, average power, action smoothness, and, for PPO-Lag, the Lagrange multiplier $\lambda$. Three patterns recur. First, all three methods converge to comparable task return, so constraining the average power does not impede task learning; in several settings, PPO-Lag even finishes marginally higher (e.g., BlueROV lemniscate in Table~\ref{tab:performance}). Second, the PPO-Lag average-power curve (red) separates downward from PPO (green) and PPO-Energy (blue) early in training and the gap persists to convergence, while its smoothness curve likewise settles closest to zero. Third, the $\lambda$ curve rises through early-to-mid training as the policy initially overspends the budget, then decreases and plateaus once the average power has been driven down, the signature of a dual variable settling at a working value; on the tightest-budget vehicles it plateaus near its ceiling $\lambda_{\max}$, consistent with the residual over-budget gap quantified in RQ2 below. This online adaptation is exactly what a fixed energy weight cannot provide.

\subsection{RQ2: Average-Power Efficiency}
PPO-Lag draws the lowest mean power in all twelve vehicle--task settings (Table~\ref{tab:performance}, bold), and every mean-power gap over both baselines is significant at $p<0.01$ (Welch's $t$-test). Relative to the task-only PPO, the reduction spans $14\%$ (BlueROV-Heavy spiral, $1644.6$ vs.\ $1912.0$\,W) to $32\%$ (BlueROV-Heavy hover, $1103.0$ vs.\ $1611.6$\,W) and averages about $20\%$, with the largest reductions on the strongly actuated BlueROV-Heavy and BlueROV (e.g., BlueROV spiral $1558.0$ vs.\ $2140.9$\,W, a $27\%$ reduction). Against the energy-reward PPO-Energy, the mean power reduction is lower everywhere as well, by $9$--$33\%$. The per-rollout profiles in Fig.~\ref{fig:power} match the table: PPO-Lag (red) draws the least power throughout the episode, separating from the other two right after the initial transient.

The realized average power is not held strictly to the declared budget. We set deliberately tight budgets ($d=1300/1200/1300$\,W for BlueROV/BlueROV-Heavy/CUREE-AUV), and the mean power lies above $d$ in all but one setting (BlueROV-Heavy hover, $1103$\,W). Under such budgets the multiplier saturates at $\lambda_{\max}$ (Remark~\ref{rem:saturation}), so the policy converges to the lowest average power compatible with the task rather than to $d$ exactly. The budget therefore acts as the monotone knob of Remark~\ref{rem:saturation}: lowering $d$ presses average power down without a per-setting weight search, and the gap to $d$ measures how hard energy is being traded against the task.

Crucially, the energy-reward baseline is not a reliable substitute. In $5$ of the $12$ settings PPO-Energy actually draws more average power than the task-only PPO, for example BlueROV-Heavy spiral ($2243.2$ vs.\ $1912.0$\,W, $+17\%$), BlueROV-Heavy lemniscate ($2165.2$ vs.\ $1893.2$\,W, $+14\%$), and BlueROV-Heavy circle ($2315.7$ vs.\ $2117.5$\,W, $+9\%$). Two factors explain this: the effort reward acts on action magnitude rather than modeled power, so shrinking it need not reduce average power; and one fixed weight cannot match the differing power--accuracy trade-offs across vehicles and tasks. PPO-Lag, acting on modeled power through an adaptive multiplier, uses less average power than both alternatives in every setting, the central empirical result of this paper.

\begin{figure}[tb]
    \centering
    \begin{subfigure}{0.49\linewidth}
        \centering
        \includegraphics[width=\linewidth]{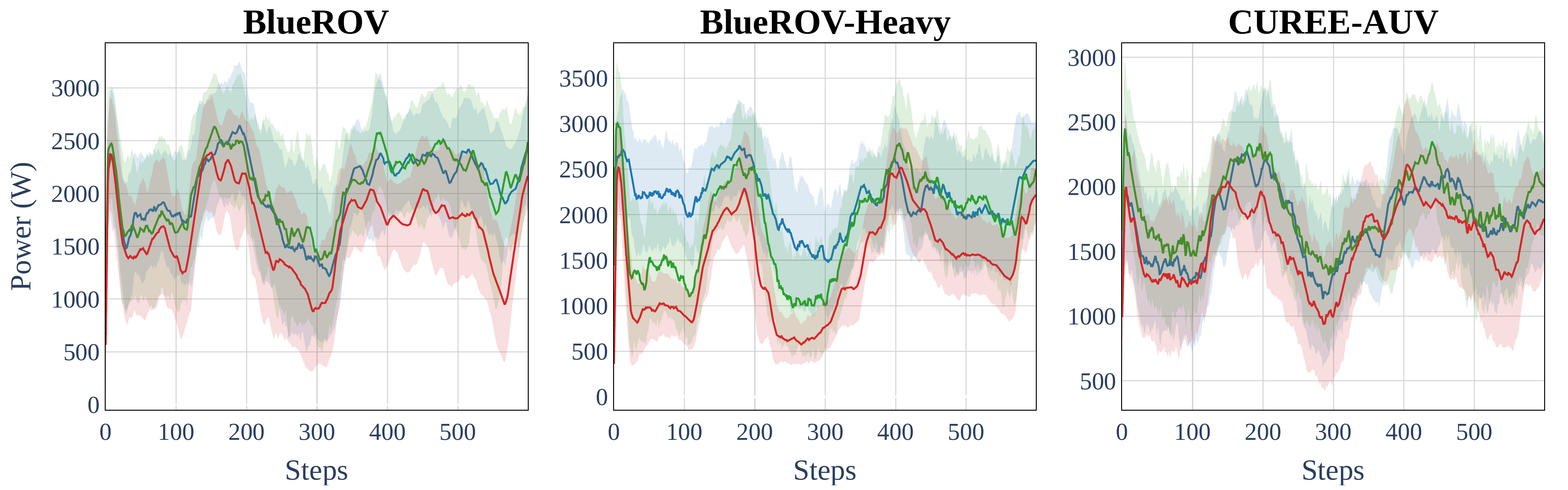}
        \caption{Track-Lemniscate}
    \end{subfigure}\hfill
    \begin{subfigure}{0.49\linewidth}
        \centering
        \includegraphics[width=\linewidth]{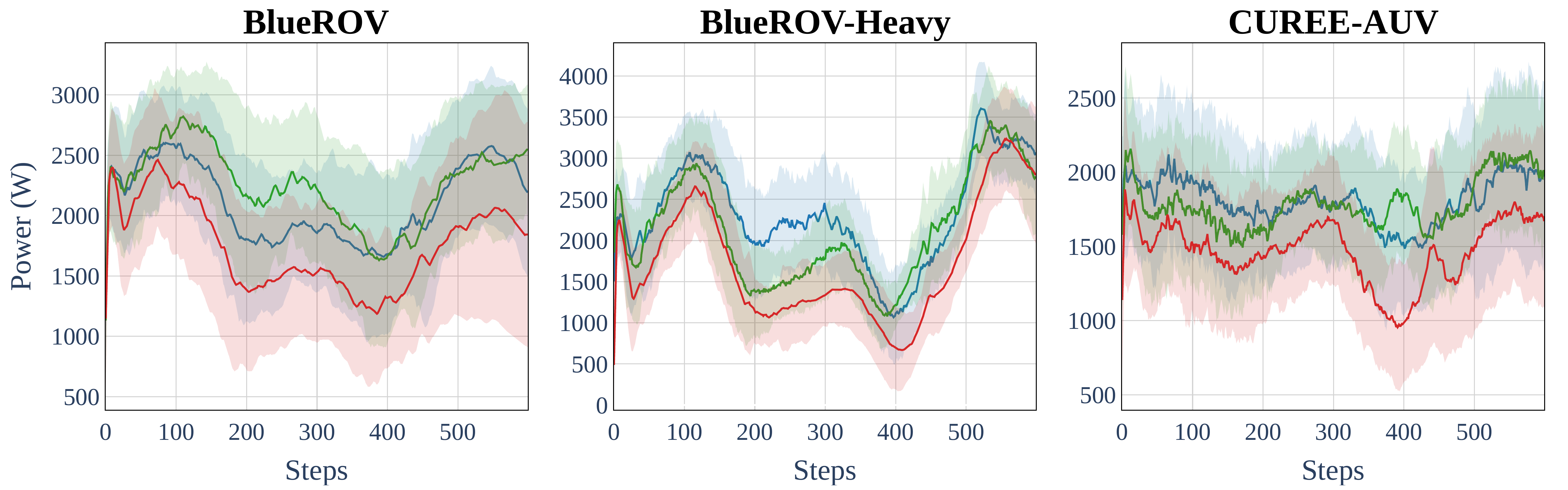}
        \caption{Track-Circle}
    \end{subfigure}\\[2mm]
    \begin{subfigure}{0.49\linewidth}
        \centering
        \includegraphics[width=\linewidth]{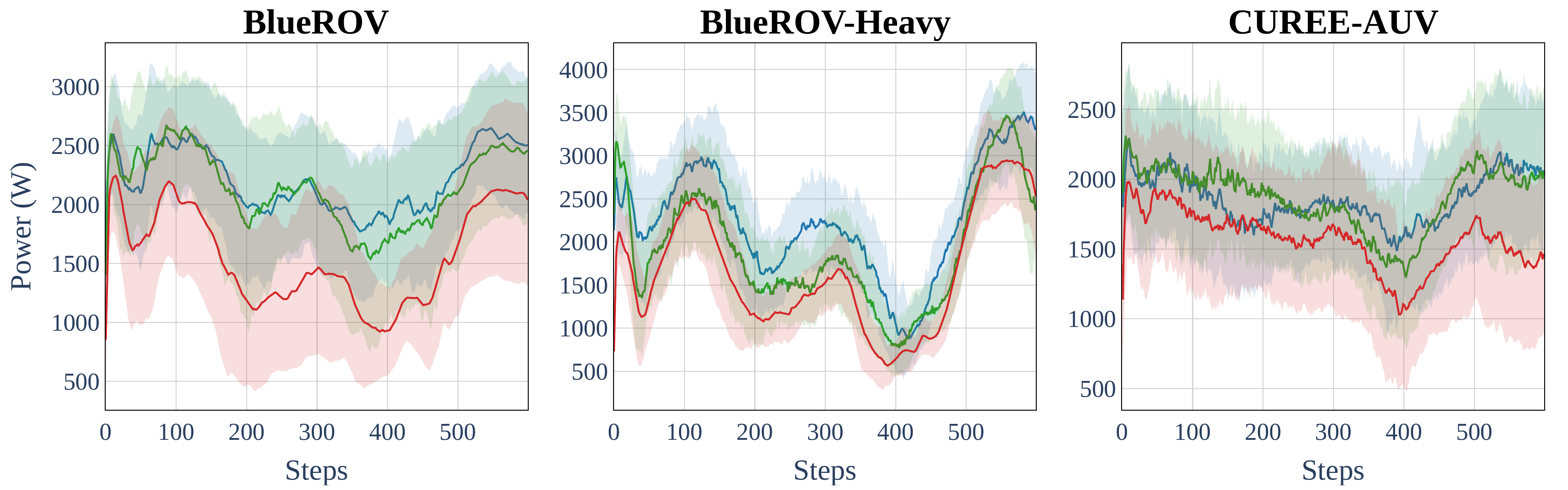}
        \caption{Track-Spiral}
    \end{subfigure}\hfill
    \begin{subfigure}{0.49\linewidth}
        \centering
        \includegraphics[width=\linewidth]{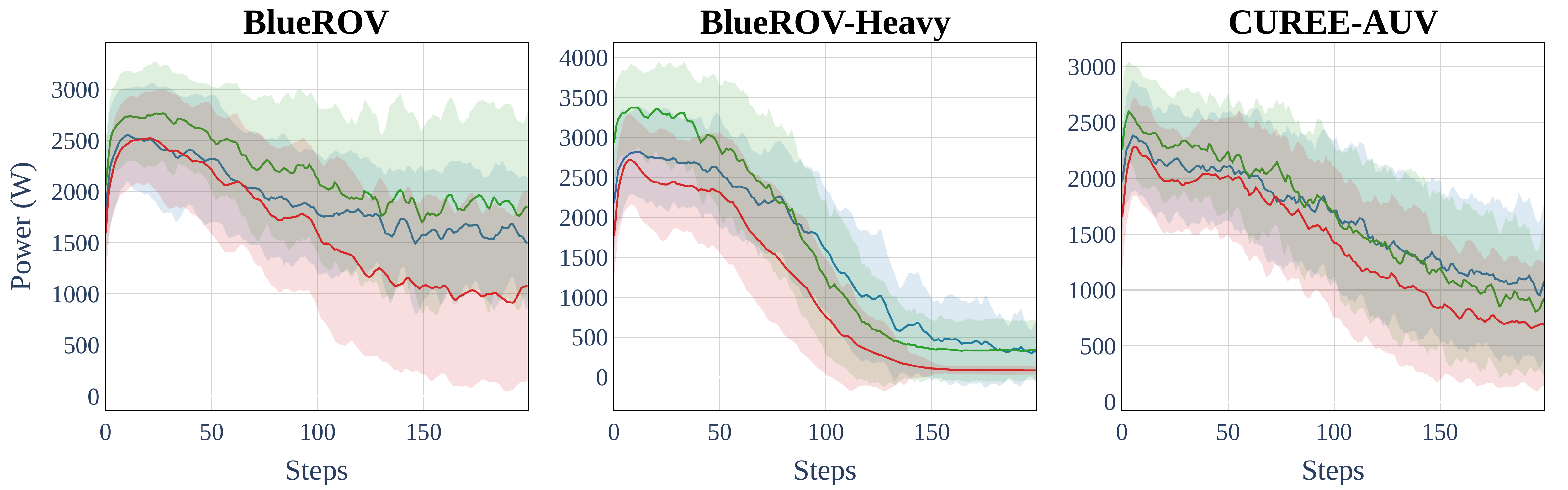}
        \caption{Hover}
    \end{subfigure}\\
    \vspace{1mm}
    \includegraphics[width=0.6\linewidth]{figures/legend.pdf}
    \caption{Mean thruster power over an evaluation episode, per vehicle (columns); each curve is the mean over 100 seeds, with shaded bands denoting $\pm1$ standard deviation.}
    \label{fig:power}
\end{figure}

\subsection{RQ3: Control Smoothness}
PPO-Lag attains the best (closest-to-zero) smoothness score in eleven of the twelve settings (Table~\ref{tab:performance}), and in most of them the margin over both baselines is significant at $p<0.01$. The gains are largest on hover and on the strongly actuated vehicles: BlueROV-Heavy hover improves to $0.021$ from $0.081$ (PPO) and $0.129$ (PPO-Energy), CUREE-AUV hover to $0.084$ from $0.251$ and $0.261$, and BlueROV-Heavy spiral to $0.079$ from $0.220$ and $0.178$. The lone exception is BlueROV-Heavy lemniscate, where PPO-Energy is marginally smoother ($0.131$ vs.\ $0.158$) and PPO-Lag ranks a close second. The per-rollout traces in Fig.~\ref{fig:smoothness} show the same ordering. Smoother commands are operationally valuable underwater, reducing thruster wear, mechanical fatigue.

\begin{figure}[tb]
    \centering
    \begin{subfigure}{0.49\linewidth}
        \centering
        \includegraphics[width=\linewidth]{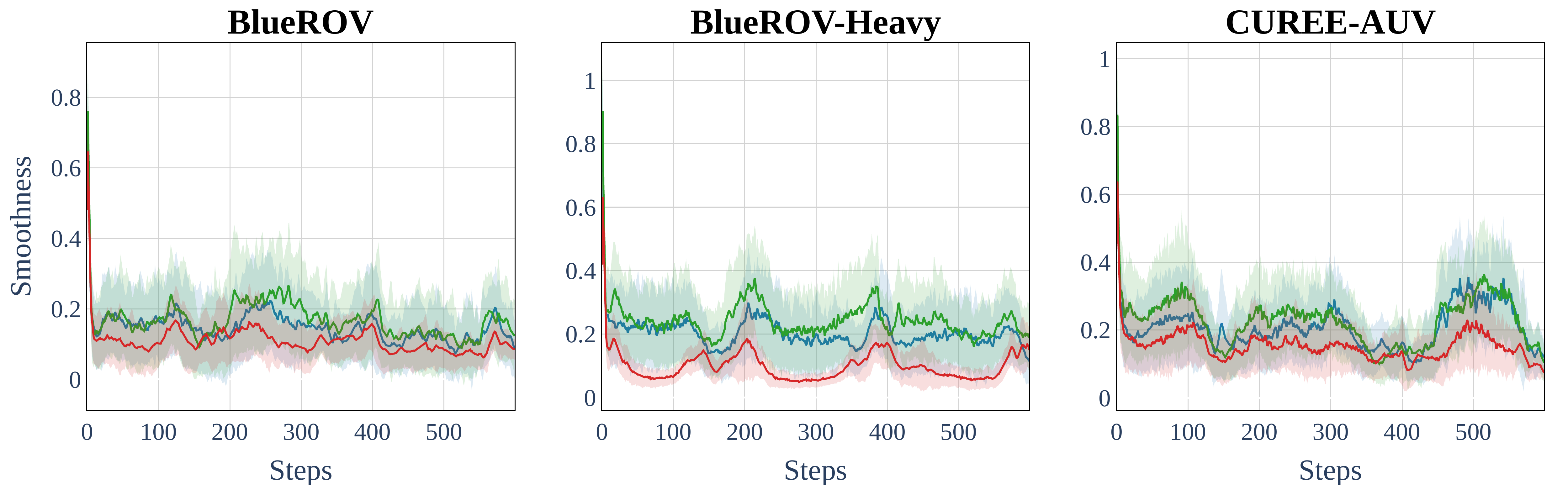}
        \caption{Track-Lemniscate}
    \end{subfigure}\hfill
    \begin{subfigure}{0.49\linewidth}
        \centering
        \includegraphics[width=\linewidth]{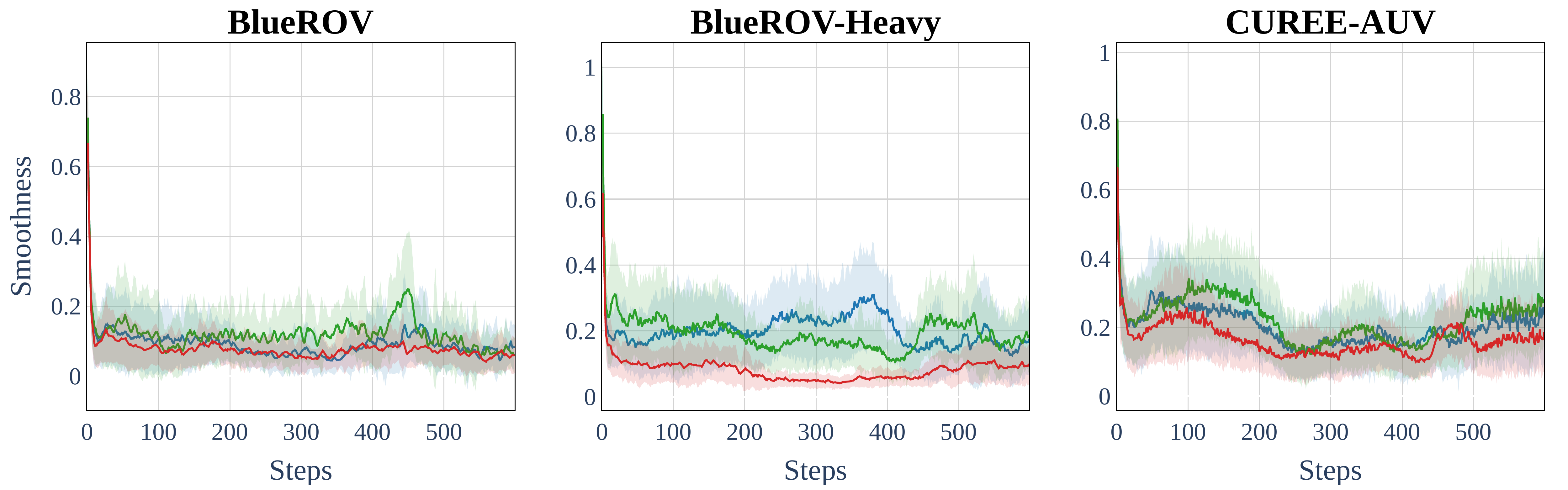}
        \caption{Track-Circle}
    \end{subfigure}\\[2mm]
    \begin{subfigure}{0.49\linewidth}
        \centering
        \includegraphics[width=\linewidth]{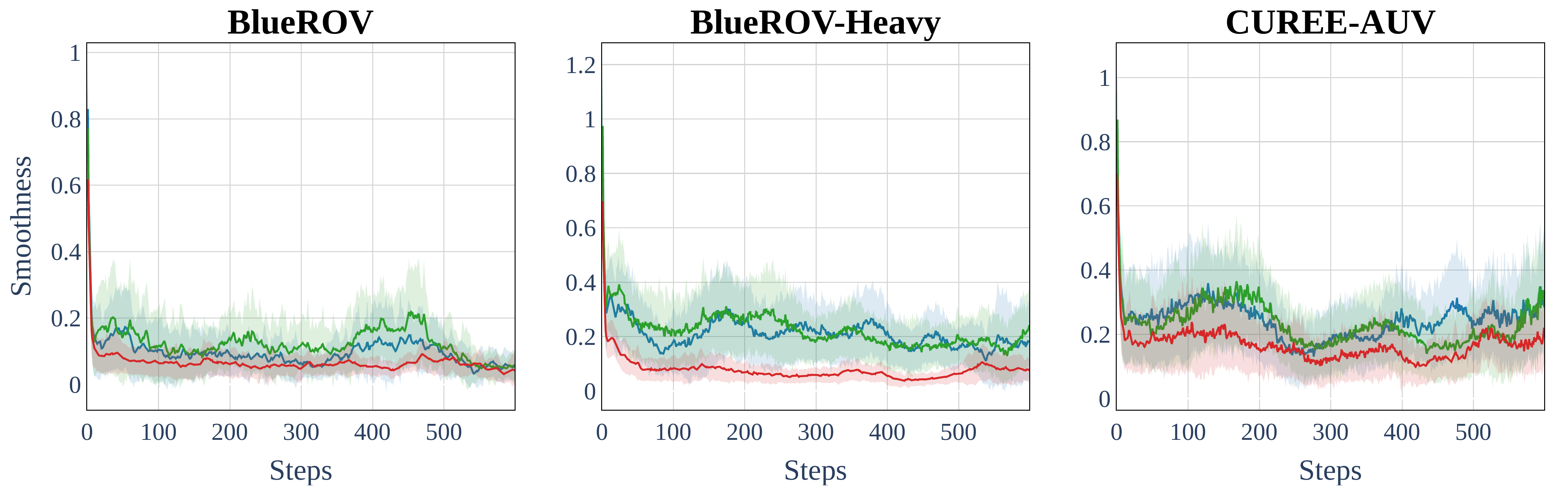}
        \caption{Track-Spiral}
    \end{subfigure}\hfill
    \begin{subfigure}{0.49\linewidth}
        \centering
        \includegraphics[width=\linewidth]{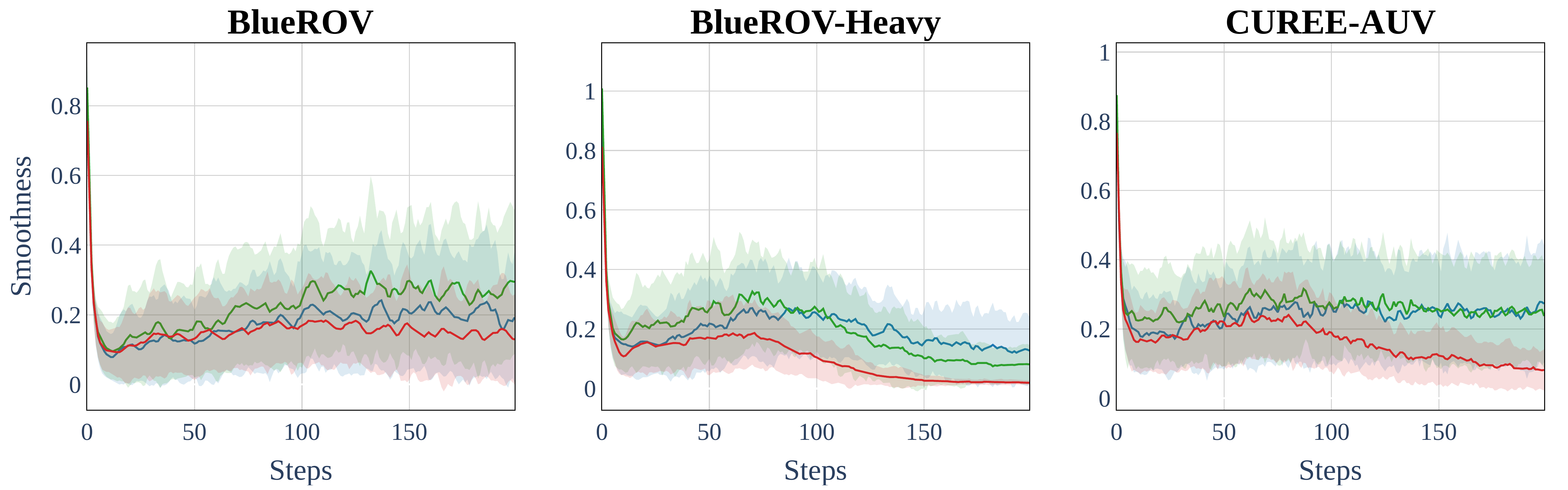}
        \caption{Hover}
    \end{subfigure}\\
    \vspace{1mm}
    \includegraphics[width=0.6\linewidth]{figures/legend.pdf}
    \caption{Action smoothness over an evaluation episode, per vehicle (columns); each curve is the mean over 100 seeds.}
    \label{fig:smoothness}
\end{figure}

\begin{figure}[tb]
    \centering
    \begin{subfigure}{0.49\linewidth}
        \centering
        \includegraphics[width=\linewidth]{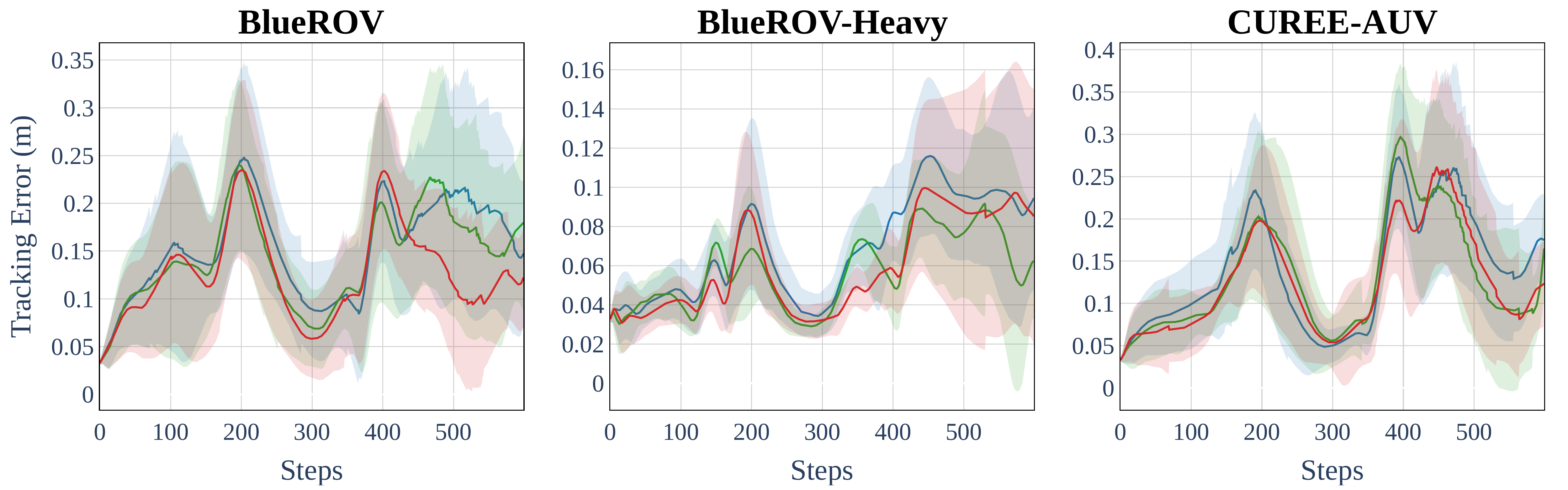}
        \caption{Track-Lemniscate}
    \end{subfigure}\hfill
    \begin{subfigure}{0.49\linewidth}
        \centering
        \includegraphics[width=\linewidth]{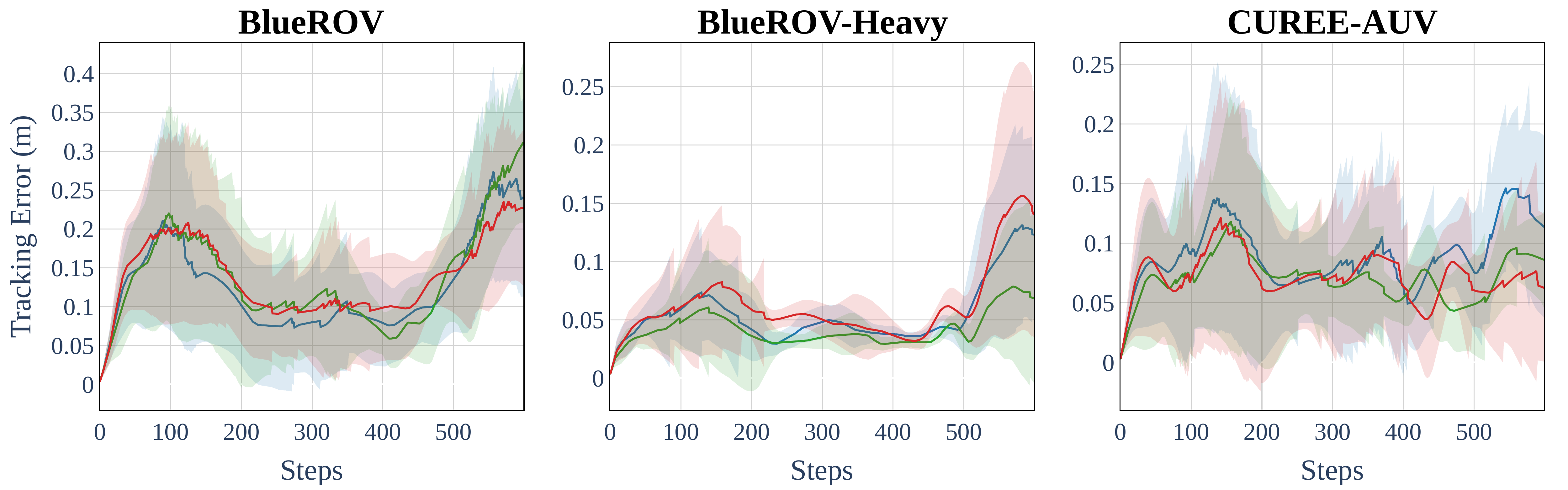}
        \caption{Track-Circle}
    \end{subfigure}\\[2mm]
    \begin{subfigure}{0.49\linewidth}
        \centering
        \includegraphics[width=\linewidth]{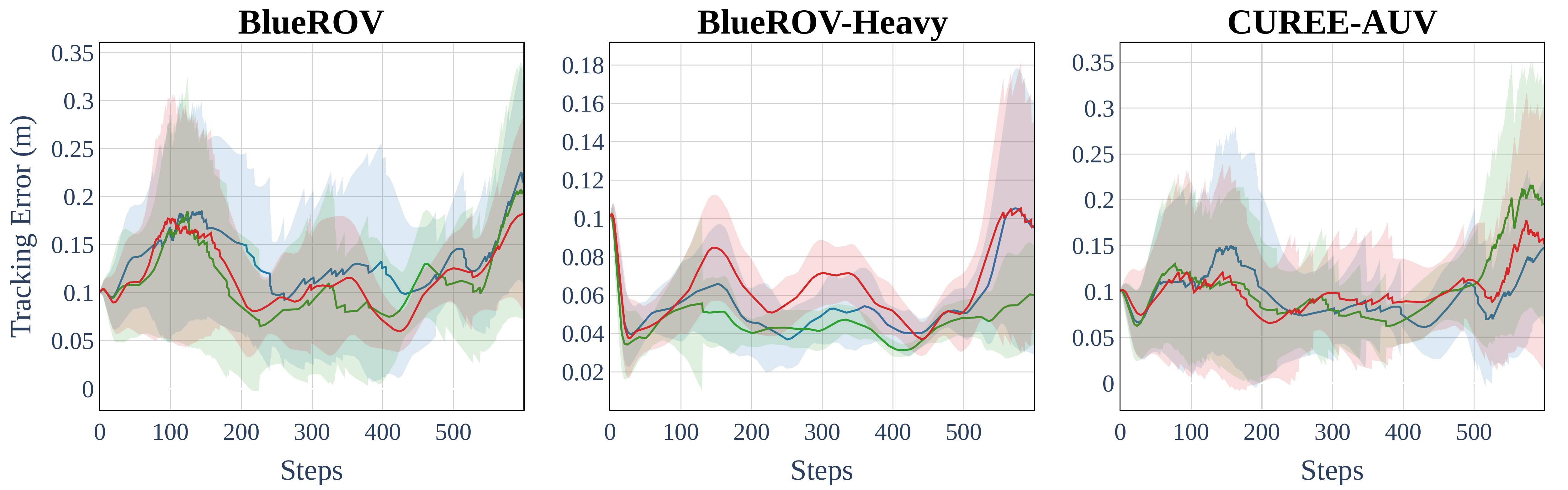}
        \caption{Track-Spiral}
    \end{subfigure}\\
    \vspace{1mm}
    \includegraphics[width=0.6\linewidth]{figures/legend.pdf}
    \caption{Tracking error over an evaluation episode, per vehicle (columns); each curve is the mean over 100 seeds.}
    \label{fig:tracking}
\end{figure}

\subsection{RQ4: Task Quality and Trajectories}
The average-power and smoothness gains largely do not cost task quality, and training return stays comparable across methods (Fig.~\ref{fig:curves}). On the strongly actuated BlueROV-Heavy, PPO-Lag holds $95$--$100\%$ success and low tracking error across all tracking tasks while drawing the least average power; BlueROV-Heavy spiral, for instance, reaches $100\%$ success at $0.127$\,m error. On the lightly actuated BlueROV, where the curved references are hard for every method, PPO-Lag gives the best success on the lemniscate ($92\%$ versus $71$--$72\%$) and spiral ($59\%$ versus $47$--$52\%$), and on the circle it improves over PPO ($45\%$ versus $35\%$, second to PPO-Energy's $48\%$) while attaining the lowest tracking error ($1.53$\,m versus PPO's $3.03$\,m, $p<0.01$). For hover, time-to-goal is comparable to or better than the baselines: PPO-Lag reaches the setpoint fastest on CUREE-AUV ($133.5$ steps vs.\ $138.5$ for PPO and $156.2$ for PPO-Energy, $p<0.01$) and is within a few steps of PPO on the other two vehicles. The tracking-error rollouts in Fig.~\ref{fig:tracking} confirm that PPO-Lag follows the reference as closely as the baselines.

The clearest task-quality trade-off is on the two highest-curvature CUREE-AUV tracking tasks: PPO-Lag reaches $69\%$ success on the circle versus $80\%$ for PPO, and $66\%$ on the spiral versus $74\%$, while cutting average power by $18\%$ and $16\%$ respectively. Under this vehicle's tight budget the constraint withdraws some of the high-authority corrective actuation the baseline spends to track precisely; the budget $d$ thus surfaces the endurance-versus-precision trade-off as an explicit, adjustable knob in watts rather than burying it in a reward weight. Tellingly, the energy-reward PPO-Energy is worse than both here (e.g., $58\%$ success on the circle at higher average power), confirming that an unconstrained penalty neither reduces average power nor protects accuracy.

Fig.~\ref{fig:traj} shows the executed 3-D trajectories for all three vehicles and four tasks, with the three policies overlaid. For tracking, every policy follows the reference closely; for hover, the trajectories converge from randomized initial poses to the central setpoint. The visual similarity of the trajectories across policies, despite the large average-power differences in Table~\ref{tab:performance}, is the qualitative counterpart of the main finding: comparable motion is achieved at substantially lower average power.

\begin{figure}
    \centering
    \makebox[0.03\linewidth]{}\hfill
    \makebox[0.30\linewidth][c]{\scriptsize\textbf{BlueROV}}\hfill
    \makebox[0.30\linewidth][c]{\scriptsize\textbf{BlueROV-Heavy}}\hfill
    \makebox[0.30\linewidth][c]{\scriptsize\textbf{CUREE-AUV}}\\[3pt]
    \begin{minipage}[c]{0.03\linewidth}\centering\rotatebox{90}{\small\textbf{Hover}}\end{minipage}\hfill
    \begin{minipage}[c]{0.32\linewidth}\centering\includegraphics[width=\linewidth]{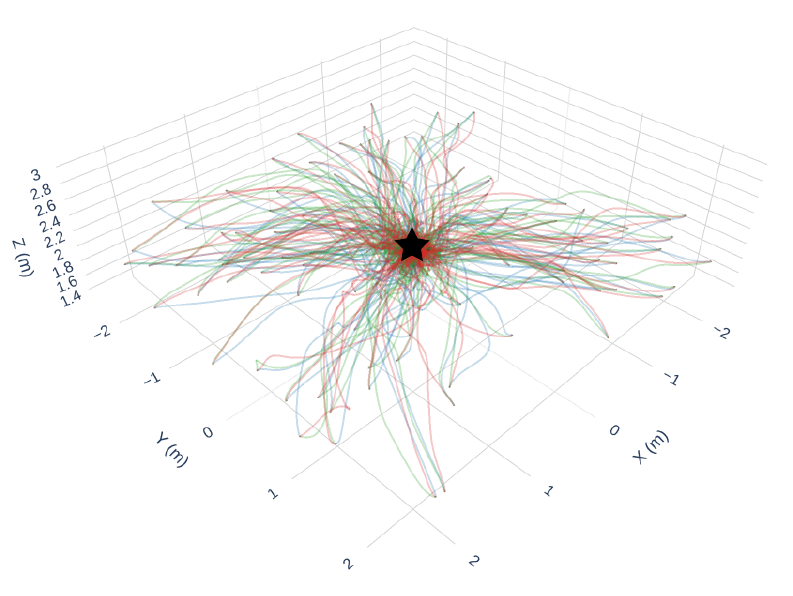}\end{minipage}\hfill
    \begin{minipage}[c]{0.32\linewidth}\centering\includegraphics[width=\linewidth]{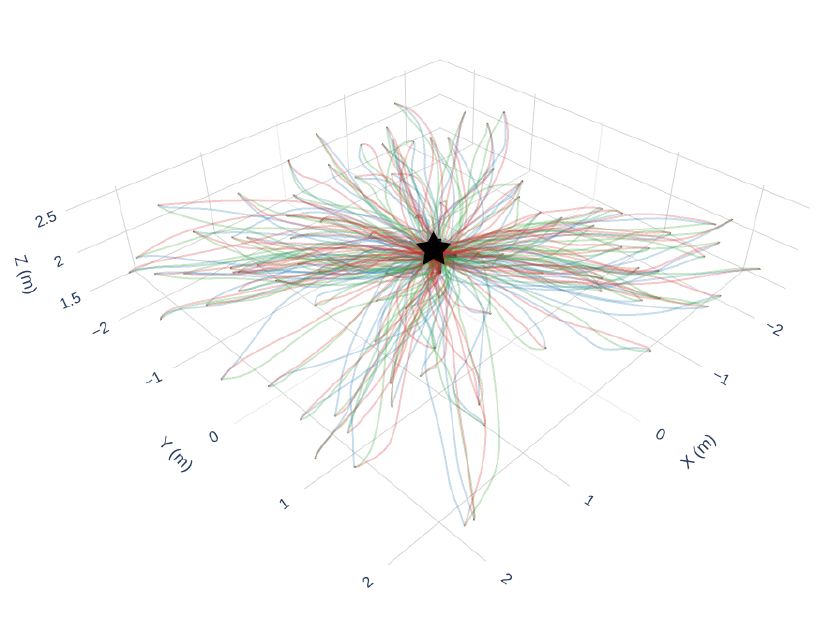}\end{minipage}\hfill
    \begin{minipage}[c]{0.32\linewidth}\centering\includegraphics[width=\linewidth]{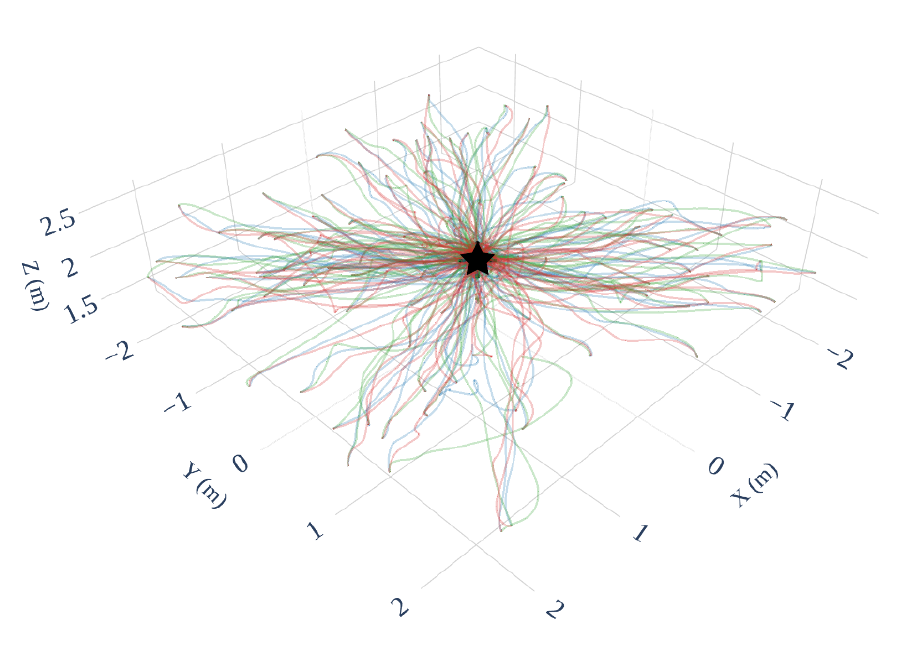}\end{minipage}\\[3pt]
    \begin{minipage}[c]{0.03\linewidth}\centering\rotatebox{90}{\small\textbf{Track-Lemniscate}}\end{minipage}\hfill
    \begin{minipage}[c]{0.32\linewidth}\centering\includegraphics[width=\linewidth]{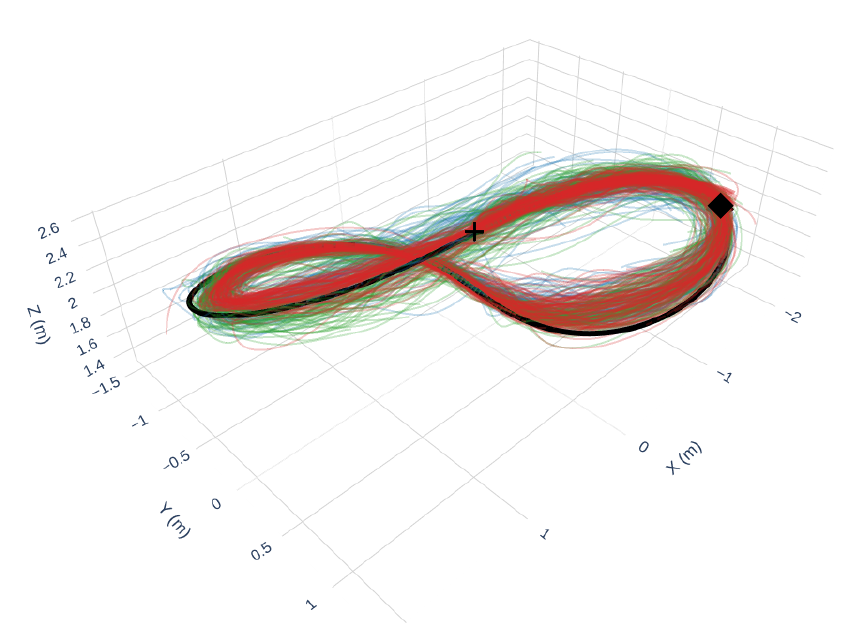}\end{minipage}\hfill
    \begin{minipage}[c]{0.32\linewidth}\centering\includegraphics[width=\linewidth]{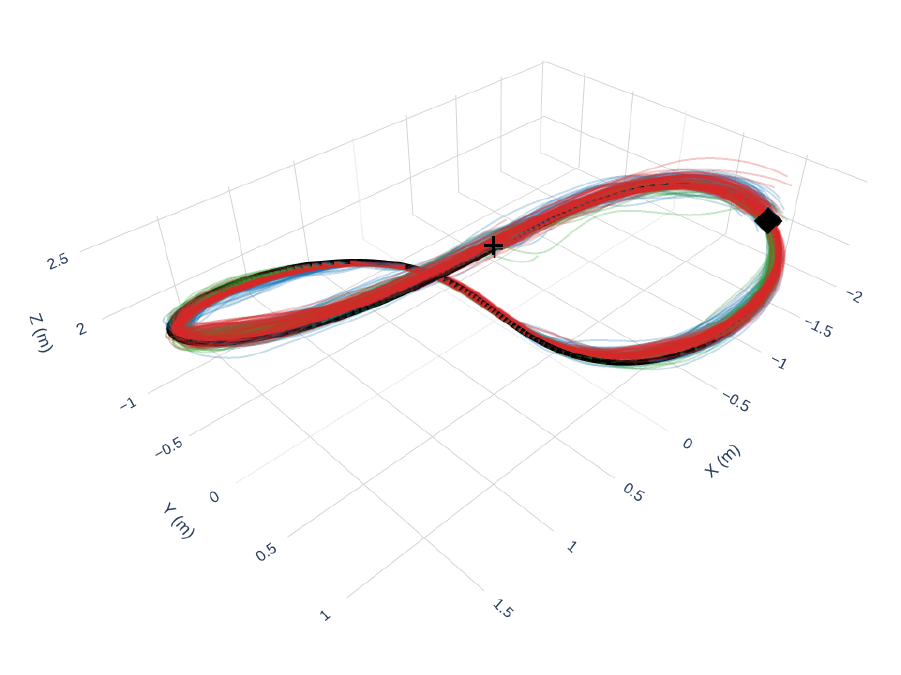}\end{minipage}\hfill
    \begin{minipage}[c]{0.32\linewidth}\centering\includegraphics[width=\linewidth]{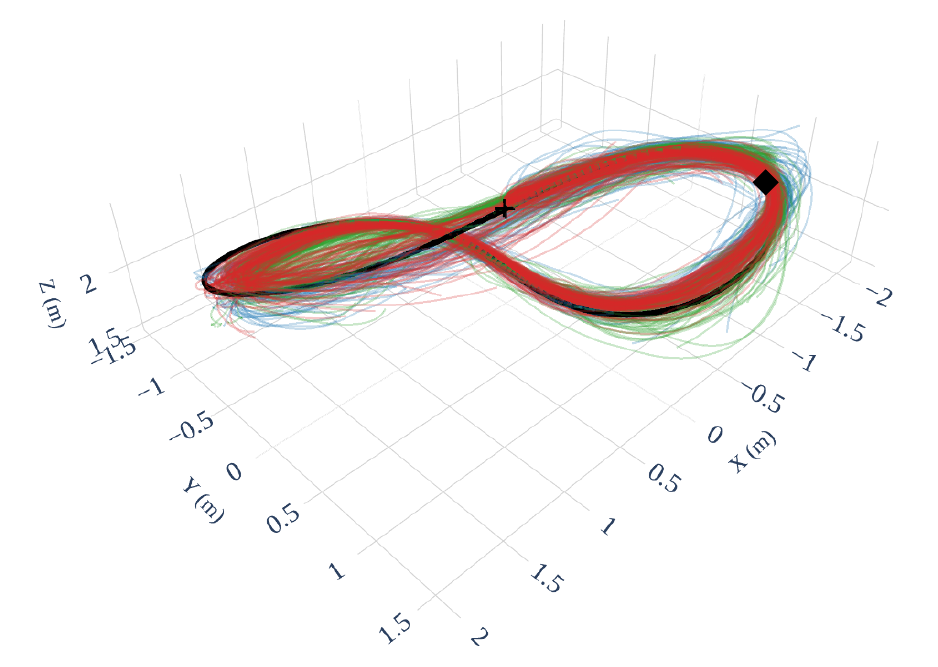}\end{minipage}\\[3pt]
    \begin{minipage}[c]{0.03\linewidth}\centering\rotatebox{90}{\small\textbf{Track-Circle}}\end{minipage}\hfill
    \begin{minipage}[c]{0.32\linewidth}\centering\includegraphics[width=\linewidth]{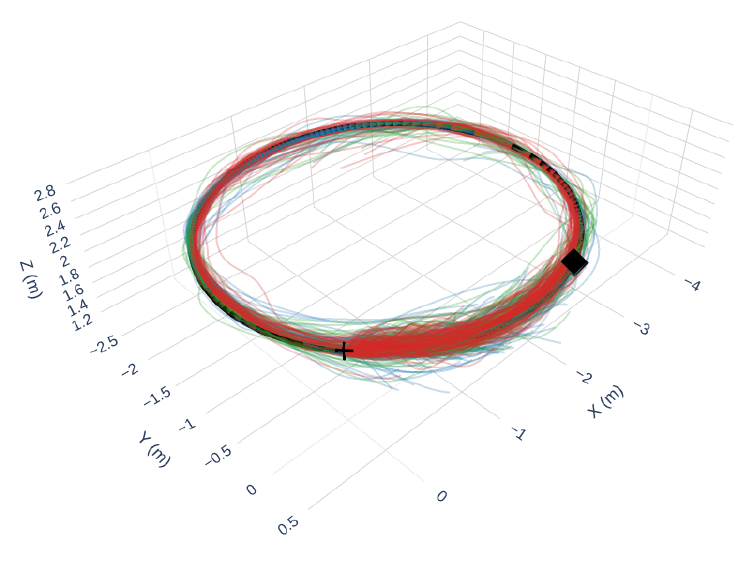}\end{minipage}\hfill
    \begin{minipage}[c]{0.32\linewidth}\centering\includegraphics[width=\linewidth]{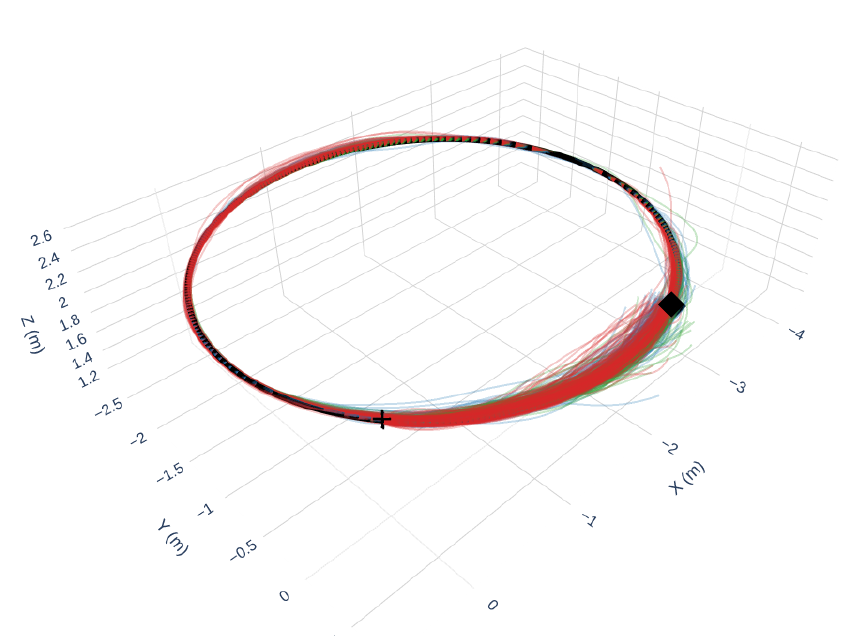}\end{minipage}\hfill
    \begin{minipage}[c]{0.32\linewidth}\centering\includegraphics[width=\linewidth]{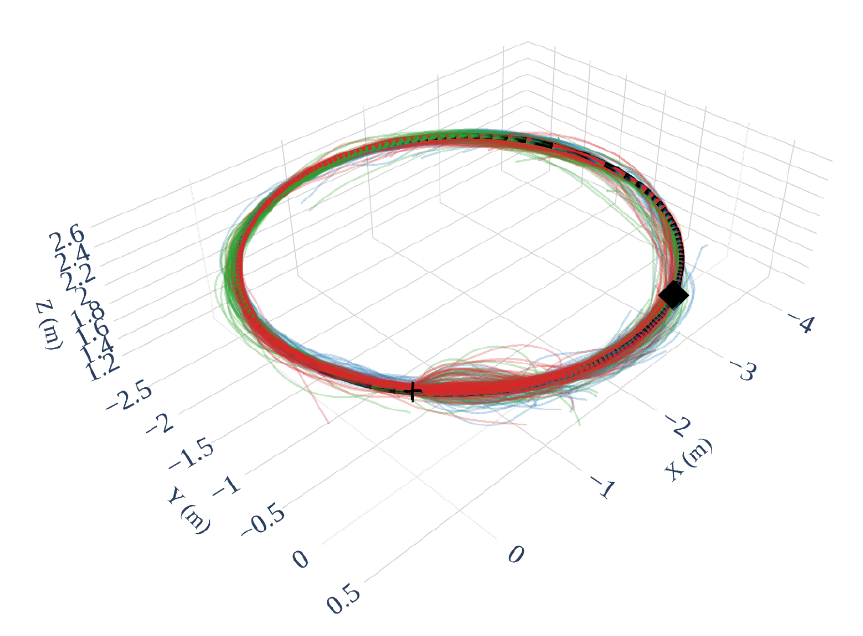}\end{minipage}\\[3pt]
    \begin{minipage}[c]{0.03\linewidth}\centering\rotatebox{90}{\small\textbf{Track-Spiral}}\end{minipage}\hfill
    \begin{minipage}[c]{0.32\linewidth}\centering\includegraphics[width=\linewidth]{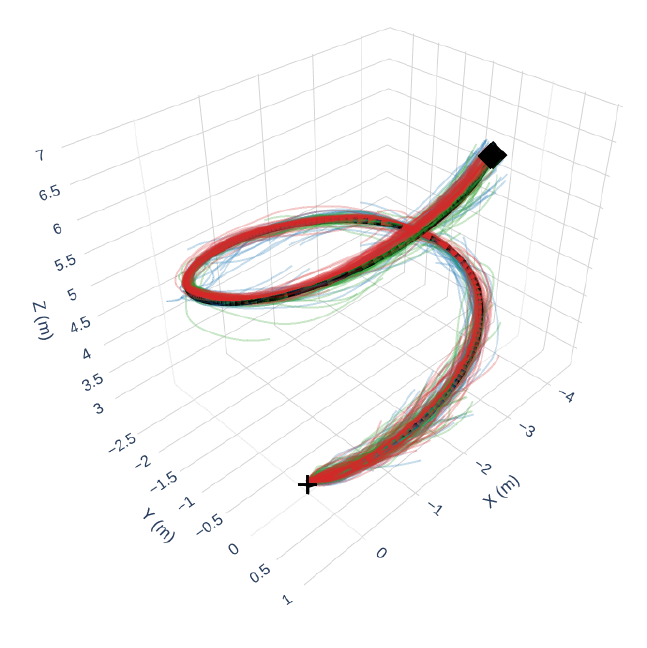}\end{minipage}\hfill
    \begin{minipage}[c]{0.32\linewidth}\centering\includegraphics[width=\linewidth]{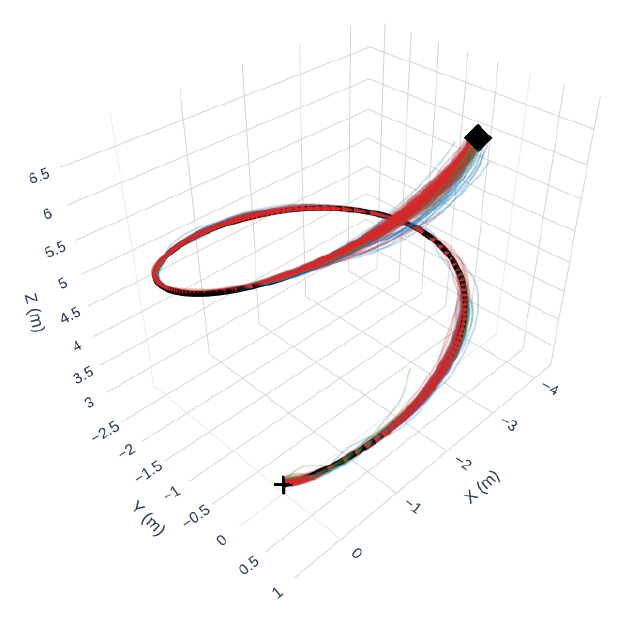}\end{minipage}\hfill
    \begin{minipage}[c]{0.32\linewidth}\centering\includegraphics[width=\linewidth]{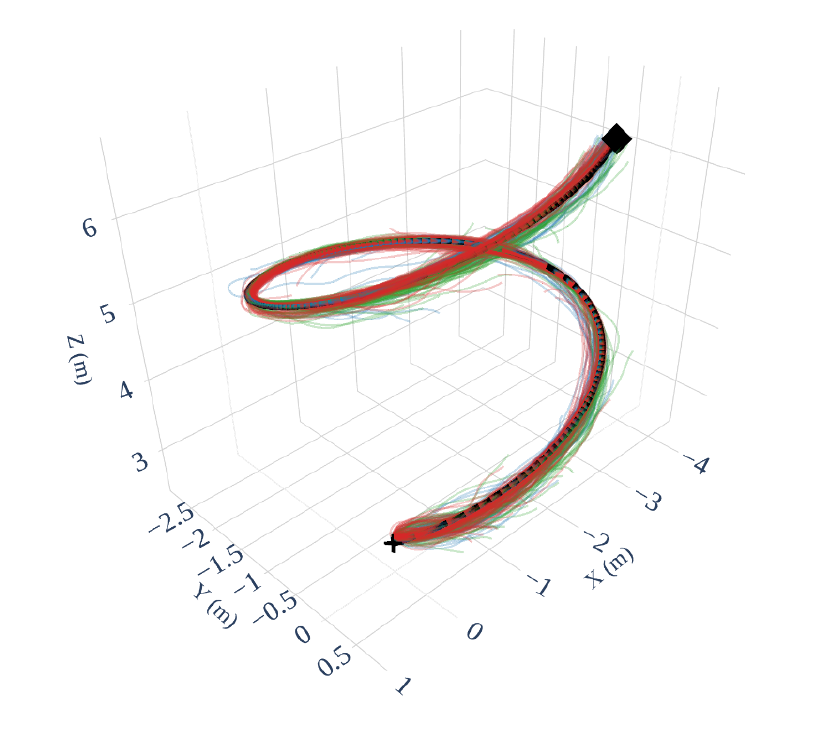}\end{minipage}
    
    \vspace{2mm}
    \includegraphics[width=0.8\linewidth]{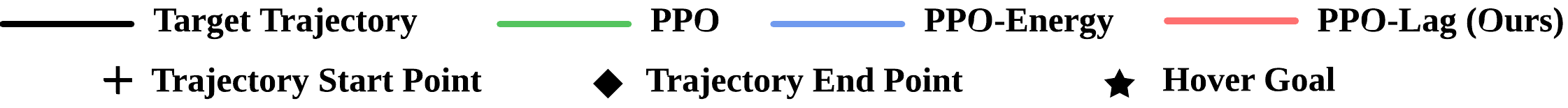}
    \caption{Executed 3-D trajectories for every vehicle (columns) and task (rows), with the three policies overlaid.}
    \label{fig:traj}
\end{figure}

\subsection{RQ5: On-board Resource Cost}
Finally, we profile the controller's on-board cost. Because the energy treatment enters only during training, the three methods deploy the identical actor network, so their inference-time footprint is the same by construction; Table~\ref{tab:resource_usage} and Fig.~\ref{fig:orin} report it for the deployed PPO-Lag policy on an embedded NVIDIA Jetson Orin Nano (8\,GB), per task and vehicle. The cost is modest and uniform across settings: $143$k--$146$k parameters (fixed by the vehicle's state and action dimensions), sub-millisecond inference ($0.93$--$0.96$\,ms), near-saturated single-core central processing unit (CPU) usage ($\sim$99--100\%), $\sim$640--646\,MB random-access memory (RAM), $\sim$105\,MB GPU memory, and $\sim$5.1--5.6\,W board power. The controller therefore runs comfortably on resource-constrained hardware, and the average-power reductions reported above come at no on-board compute cost.

\begin{figure}[tb]
    \centering
    \begin{subfigure}{0.49\linewidth}
        \centering
        \includegraphics[width=\linewidth, height=2.5cm]{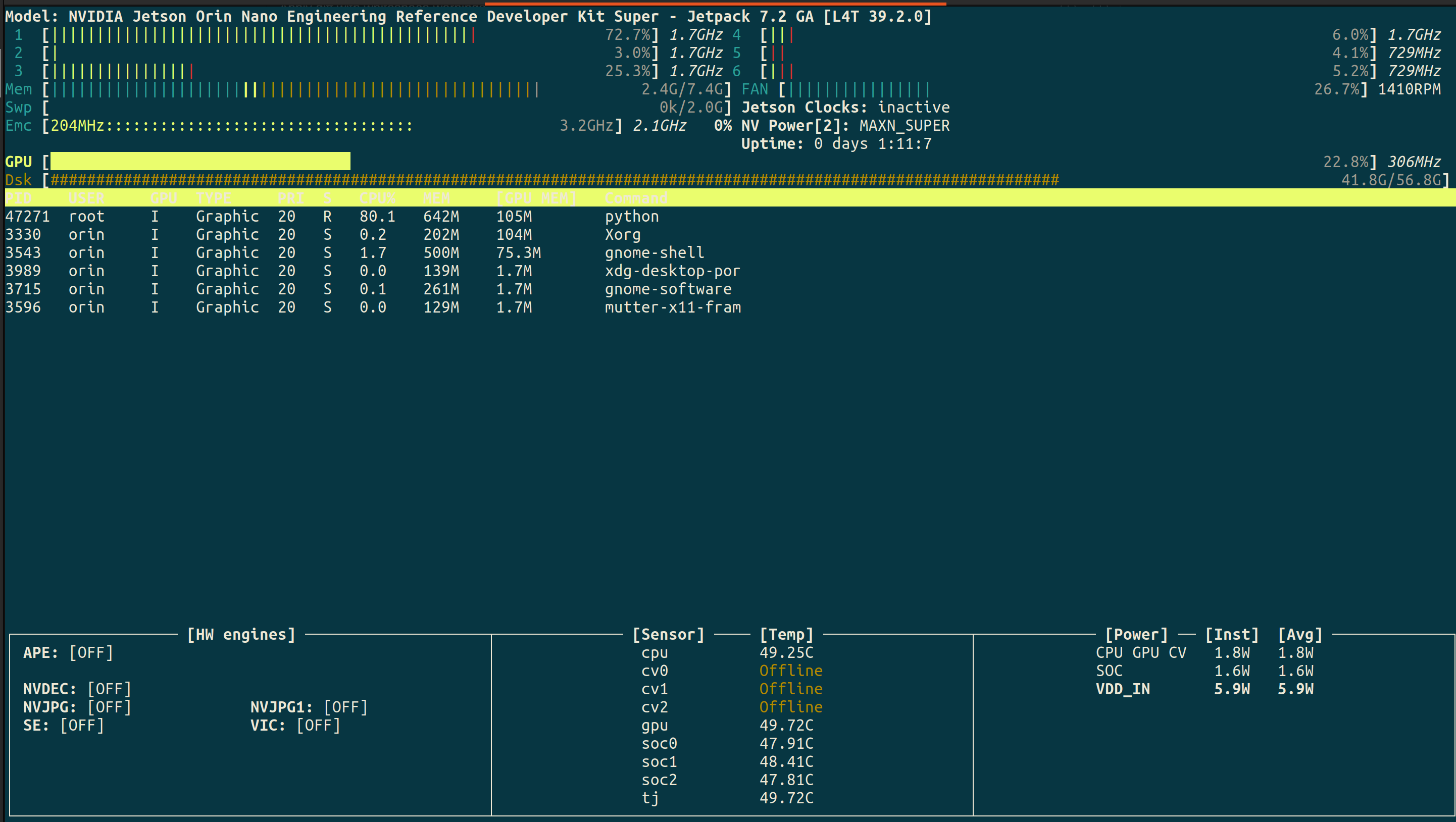}
        \label{fig:all}
    \end{subfigure}
    \hfill
    \begin{subfigure}{0.49\linewidth}
        \centering
        \includegraphics[width=\linewidth, height=2.5cm]{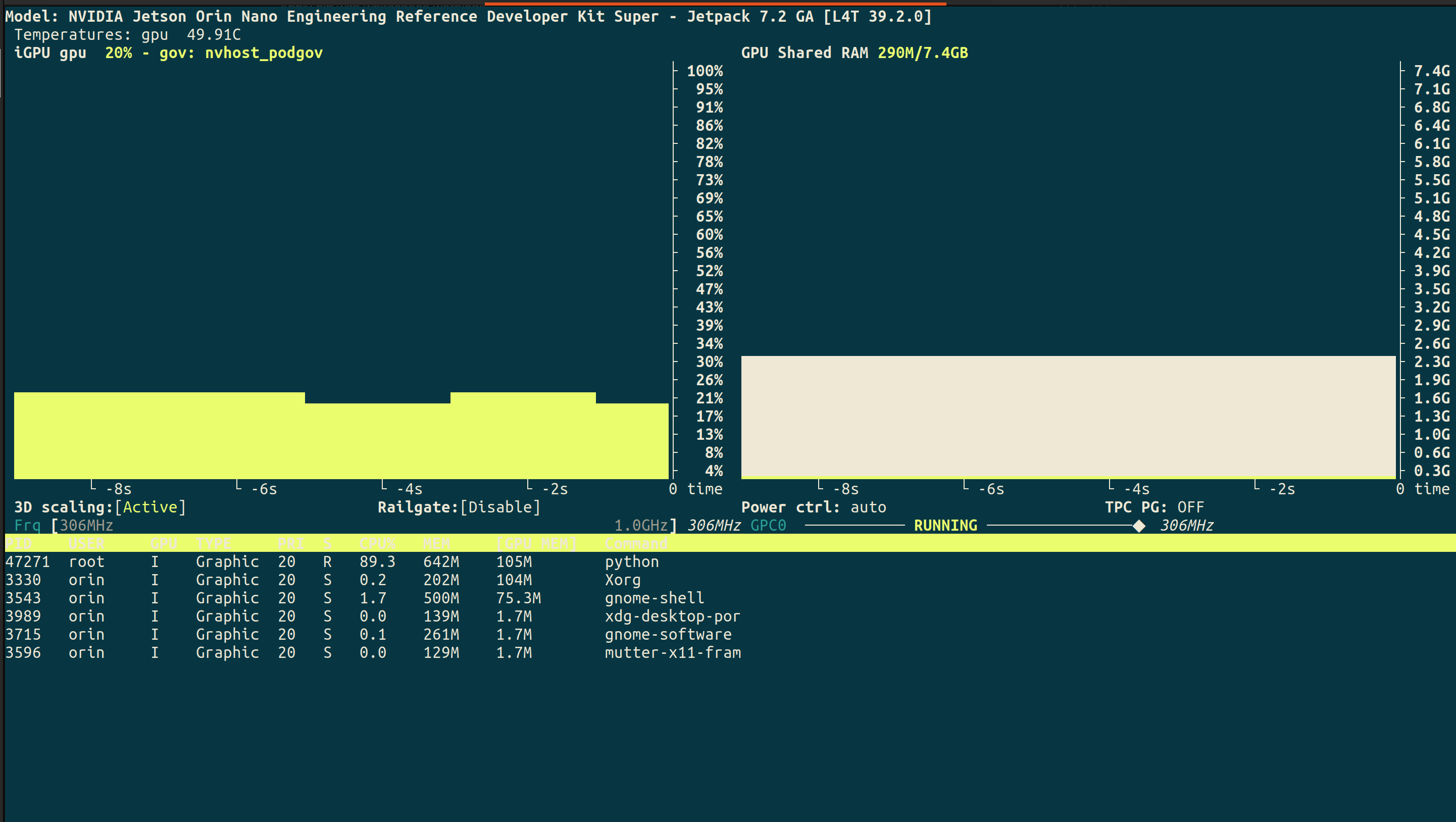}
        \label{fig:gpu}
    \end{subfigure}

    \vspace{2pt}
    \begin{subfigure}{0.49\linewidth}
        \centering
        \includegraphics[width=\linewidth, height=2.5cm]{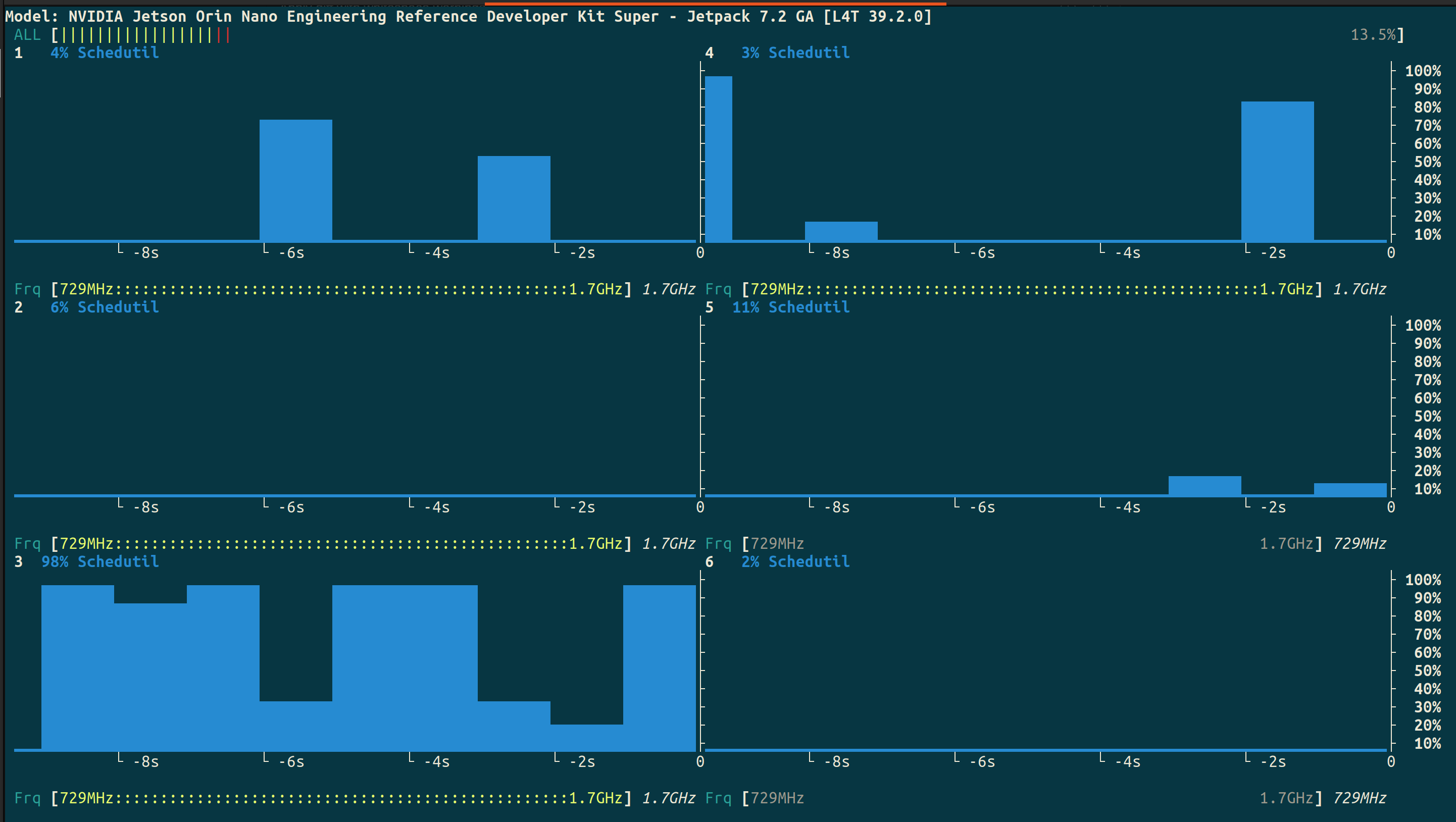}
        \label{fig:cpu}
    \end{subfigure}
    \hfill
    \begin{subfigure}{0.49\linewidth}
        \centering
        \includegraphics[width=\linewidth,trim={0pt 300pt 0pt 350pt}, clip, height=2.5cm]{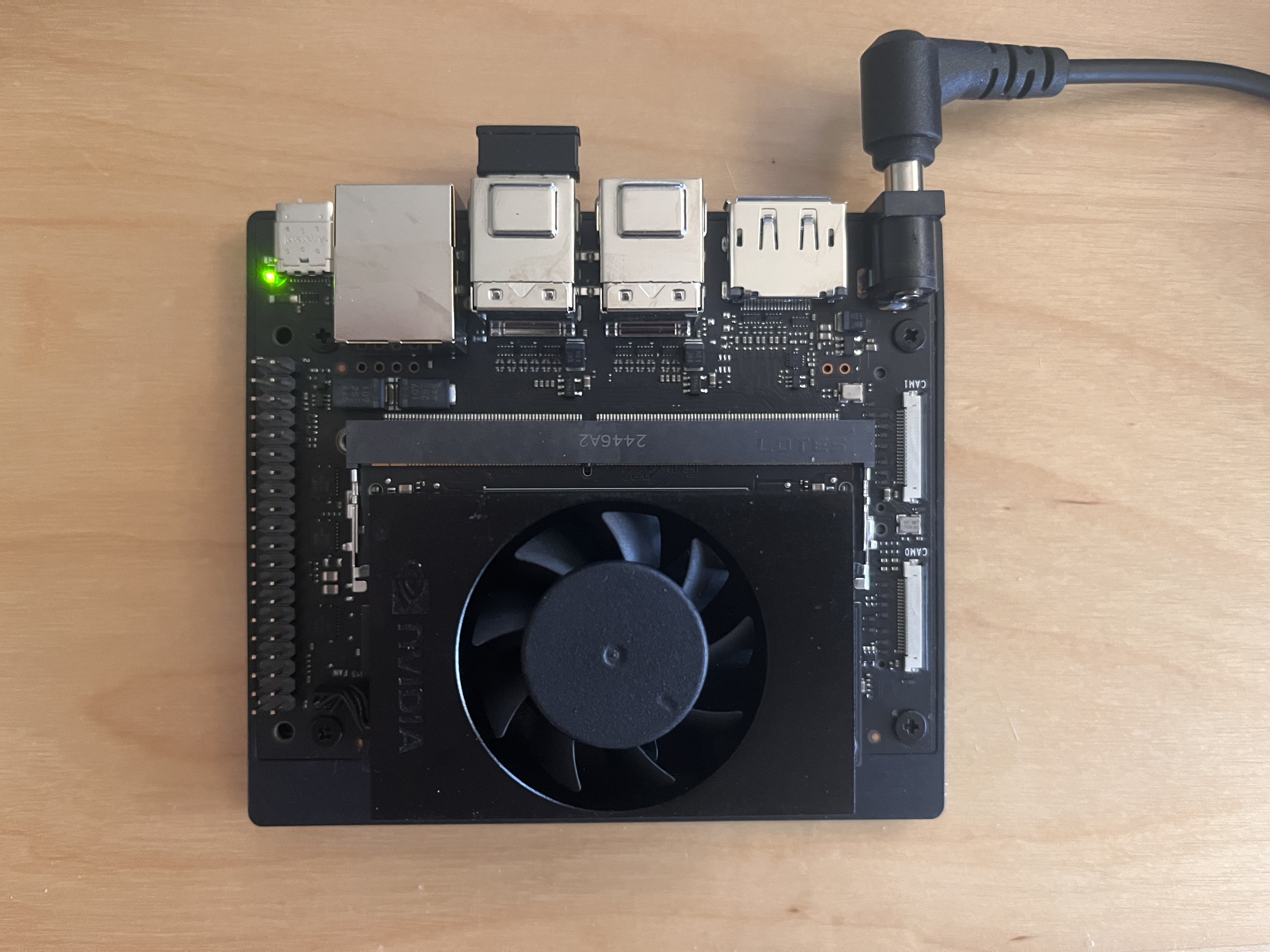}
        \label{fig:ram}
    \end{subfigure}
    \caption{On-board resource utilization on a Jetson Orin Nano: overall load (top-left), GPU (top-right), CPU (bottom-left), and hardware (bottom-right).}
    \label{fig:orin}
\end{figure}

\begin{table*}
\centering
\scriptsize
\setlength{\tabcolsep}{3pt}
\caption{On-board resource cost of the deployed PPO-Lag policy on a Jetson Orin Nano, per task and vehicle; the three policies share the same network, so this footprint is identical for all.}
\label{tab:resource_usage}
\resizebox{\textwidth}{!}{%
\begin{tabular}{llcccccccccccc}
\toprule
\multirow{2}{*}{Task} & \multirow{2}{*}{Robot} & \multirow{2}{*}{Policy Params} & \multirow{2}{*}{Inf. (ms)} & \multicolumn{2}{c}{CPU Util. (\%)} & \multicolumn{2}{c}{RAM (MB)} & \multicolumn{2}{c}{GPU Util. (\%)} & \multicolumn{2}{c}{GPU Mem (MB)} & \multicolumn{2}{c}{Power (W)} \\

\cmidrule(lr){5-6}\cmidrule(lr){7-8}\cmidrule(lr){9-10}\cmidrule(lr){11-12}\cmidrule(lr){13-14}
&&&& Mean & Max & Mean & Max & Mean & Max & Mean & Max& Mean & Max \\

\midrule
Hover & BlueROV  & 143,116 & 0.9395 & 99.6 & 109.4 & 646.0 & 647.2 & 18.0 & 25.1 & 105.1 & 105.6 & 5.20 & 5.93 \\
Hover & BlueROV-Heavy  & 144,144 & 0.9477 & 98.8 & 109.3 & 646.6 & 647.2 & 23.3 & 69.9 & 105.0 & 105.6 & 5.15 & 5.81 \\
Hover & CUREE-AUV  & 143,116 & 0.9338 & 100.1 & 109.4 & 644.7 & 645.2 & 16.1 & 27.5 & 104.9 & 105.6 & 5.15 & 5.85 \\
\midrule
Track-Lemniscate & BlueROV  & 144,652 & 0.9336 & 99.9 & 109.5 & 644.5 & 645.1 & 18.0 & 28.3 & 105.1 & 105.6 & 5.20 & 6.01 \\
Track-Lemniscate & BlueROV-Heavy & 145,680 & 0.9310 & 99.9 & 109.5 & 644.6 & 645.3 & 16.6 & 24.5 & 105.0 & 105.6 & 5.24 & 5.85 \\
Track-Lemniscate & CUREE-AUV  & 144,652 & 0.9533 & 99.2 & 109.4 & 644.4 & 645.1 & 14.7 & 23.9 & 104.9 & 105.6 & 5.14 & 5.85 \\
\midrule
Track-Circle & BlueROV & 144,652 & 0.9472 & 99.8 & 109.6 & 644.3 & 644.8 & 25.1 & 53.5 & 105.1 & 105.6 & 5.53 & 6.80 \\
Track-Circle & BlueROV-Heavy  & 145,680 & 0.9319 & 99.5 & 109.3 & 644.7 & 645.3 & 17.9 & 27.6 & 105.1 & 105.6 & 5.21 & 6.01 \\
Track-Circle & CUREE-AUV  & 144,652 & 0.9402 & 99.6 & 109.5 & 644.2 & 644.6 & 16.7 & 24.7 & 105.0 & 105.6 & 5.31 & 6.01 \\
\midrule
Track-Spiral & BlueROV  & 144,652 & 0.9525 & 99.7 & 109.5 & 645.4 & 645.9 & 16.9 & 25.1 & 105.1 & 105.6 & 5.21 & 5.85 \\
Track-Spiral & BlueROV-Heavy  & 145,680 & 0.9388 & 99.4 & 109.4 & 645.4 & 645.9 & 20.0 & 30.8 & 105.1 & 105.6 & 5.62 & 5.93 \\
Track-Spiral & CUREE-AUV  & 144,652 & 0.9382 & 99.8 & 109.3 & 645.4 & 646.0 & 17.3 & 26.4 & 105.0 & 105.6 & 5.38 & 6.45 \\
\bottomrule
\end{tabular}%
}
\end{table*}

\section{Conclusion}
\label{sec:conclusion}
This paper studies energy-efficient underwater vehicle control through constrained reinforcement learning. By formulating control as a CMDP with an explicit average-power budget and solving it with a PPO-Lagrangian algorithm, we obtain a controller whose average operating power is set by an explicit, declarable, budget through an adaptive dual variable rather than a hand-tuned reward weight, supported by a short analysis showing that the dual update is well posed and that the average budget overshoot vanishes in the idealized setting. Across three vehicles and four tasks in MarineGym, the constrained policy attains the lowest average power in all twelve settings (14--32\% reduction over the task-only baseline, largest on BlueROV-Heavy) and the smoothest actuation in eleven of twelve, while largely preserving task success and keeping tracking accuracy comparable to the baselines; a fixed-weight action-effort reward, by contrast, is inconsistent and at times increases average power. Treating average power as a constraint thus provides a tuning-free route to energy-efficient underwater control that needs no per-vehicle, per-task weight search. Future work includes deployment on physical vehicles to assess the sim-to-real gap, joint constraints on average power and additional resources (e.g., thermal or thrust limits), online adaptation of the average-power budget to mission state, and extension to multi-vehicle and disturbance-rich settings.

\bibliographystyle{IEEEtran}
\bibliography{IEEEabrv,Bibliography}

\vfill

\end{document}